\documentclass[11pt]{article}
\usepackage[letterpaper,margin=1in]{geometry}
\usepackage[round,authoryear]{natbib}

\usepackage[utf8]{inputenc} 
\usepackage[T1]{fontenc}    
\usepackage{graphicx}       
\graphicspath{{./}}
\usepackage{subcaption}     
\usepackage{hyperref}       
\hypersetup{hidelinks}
\usepackage{url}            
\usepackage{booktabs}       
\usepackage{amsfonts}       
\usepackage{microtype}      
\usepackage{xcolor}         
\usepackage{soul}
\usepackage{bm}
\usepackage{amsmath, amssymb, amsthm}

\newtheorem{proposition}{Proposition}

\title{Efficient Neural Field Learning via Adaptive Coverage and Focused Sampling}
\date{}

\author{%
  \textbf{Guang Zhao} \quad \textbf{Xihaier Luo} \quad \textbf{Huan-Hsin Tseng} \quad \textbf{Seungjun Lee} \\
  \textbf{Shinjae Yoo} \quad \textbf{Yihui Ren} \quad \textbf{Wei Xu} \\
  Computing and Data Sciences, Brookhaven National Laboratory \\
  Upton, NY 11973, USA \quad \url{https://www.bnl.gov/compsci/} \\
  \texttt{gzhao@bnl.gov, xluo@bnl.gov, htseng@bnl.gov, slee19@bnl.gov} \\
  \texttt{sjyoo@bnl.gov, yren@bnl.gov, xuw@bnl.gov}
}

\begin{document}

\maketitle

\begin{abstract}
    Implicit neural representations (INRs) provide a flexible framework for modeling high-dimensional continuous fields, but their training is often inefficient due to uniform subsampling that ignores spatial heterogeneity. Existing adaptive sampling methods partially address this issue by prioritizing high-error samples, but typically operate at the point level, often leading to redundant sampling in localized regions and insufficient coverage of the domain. We propose \textbf{ACES (Adaptive Coverage-aware Efficient Sampling)}, a structured sampling framework that improves training efficiency by decoupling coverage and importance. ACES constructs adaptive spatial partitions to ensure domain coverage and reduce redundancy, and applies region-level importance weighting to prioritize informative regions during training. We provide a theoretical analysis showing that adaptive partitioning reduces gradient variance by increasing within-region homogeneity, and that controlled bias in region-level weighting may improve optimization efficiency relative to standard unbiased estimators. Experiments on scientific field learning tasks demonstrate that ACES achieves faster convergence and lower error than uniform and pointwise adaptive sampling baselines, with the largest gains in fields with highly localized complexity.
\end{abstract}

\section{Introduction}
Modern scientific discovery increasingly relies on accurate, efficient, and flexible representations of high-dimensional continuous fields, such as spatiotemporal physical states, material microstructures, and environmental transport processes~\citep{park2025scent, qu2026representation}. Many of these systems are naturally defined on continuous, irregular, or evolving domains, where measurements and simulations provide large collections of pointwise evaluations rather than a single fixed grid. This has motivated implicit neural representations (INRs), a growing class of machine learning methods that optimize pointwise objectives over continuous spaces~\citep{sitzmann2020implicit, mildenhall2020nerf, park2019deepsdf}. These methods are flexible and resolution-independent, naturally handle complex geometry, remain compatible with gradient-based optimization, and are well suited to surrogate modeling, inverse problems, and accelerating simulation-driven science.

Despite their conceptual elegance, INRs often suffer from significant training inefficiency when learning continuous fields. In many practical settings, millions to billions of candidate points are available from simulations, sensors, or trajectory evaluations of the field, yet computational and memory constraints limit how many can be used per iteration. Uniform sampling from this large pool treats all regions of the field as equally informative, while in many scientific systems, complexity is not uniformly distributed: sharp interfaces, localized dynamics, and multi-scale structures lead to highly uneven error landscapes during training. Uniform sampling fails to account for this heterogeneity, allocating excessive computation to simple regions while under-resolving the complex regions that dominate approximation error.

This observation naturally motivates adaptive sampling, where training points are selected or reweighted to prioritize informative regions and improve efficiency. A range of methods have been proposed along this direction, typically using local criteria such as prediction error, residual magnitude, or gradient norm to guide sampling~\citep{zhang2024nonparametric, kheradmand2024accelerating, zhang2025evos, tack2023learning}. While these approaches can accelerate convergence in simple settings, they remain largely heuristic and exhibit two fundamental limitations. First, most treat samples independently and ignore the spatiotemporal structure of the loss field, often causing over-concentration in localized regions where nearby points provide redundant training signals, at the expense of global coverage. Second, while many methods implicitly introduce biased gradient estimators through non-uniform sampling~\citep{zhang2024nonparametric, kheradmand2024accelerating}, this bias is neither explicitly analyzed nor systematically exploited. In practice, such bias often accelerates convergence, yet existing work lacks a principled understanding of why this occurs or how to design it effectively.

To address these issues, we propose \textbf{ACES (Adaptive Coverage-aware Efficient Sampling)}, a structured adaptive sampling framework that explicitly accounts for the heterogeneity of the loss field. ACES combines structure-aware domain partitioning with bias-controlled sampling. First, ACES shifts from point-level to region-level sampling: it adaptively partitions the domain according to the evolving loss structure, enabling sampling decisions that respect spatial organization rather than purely local signals. This partitioning enforces coverage while reducing redundancy, preventing excessive clustering. Second, within each adaptive region, ACES further incorporates a bias-controlled importance weighting mechanism to prioritize informative samples. We provide theoretical insights into the role of bias in gradient estimation, suggesting how controlled bias may improve optimization efficiency, and use this analysis to guide the design of our estimator. Together, these components yield a sampling strategy that balances coverage, variance reduction, and fast convergence for learning high-dimensional continuous fields.

Figure~\ref{fig:intro_sampling_comparison} illustrates the key intuition behind ACES. Uniform sampling provides broad coverage but allocates many samples to low-error regions, while pointwise importance sampling over-concentrates in localized high-error areas. In contrast, ACES performs adaptive region-level sampling that maintains domain coverage while allocating larger sampling weights to informative regions.

\begin{figure}[t]
  \centering
  \includegraphics[width=\linewidth]{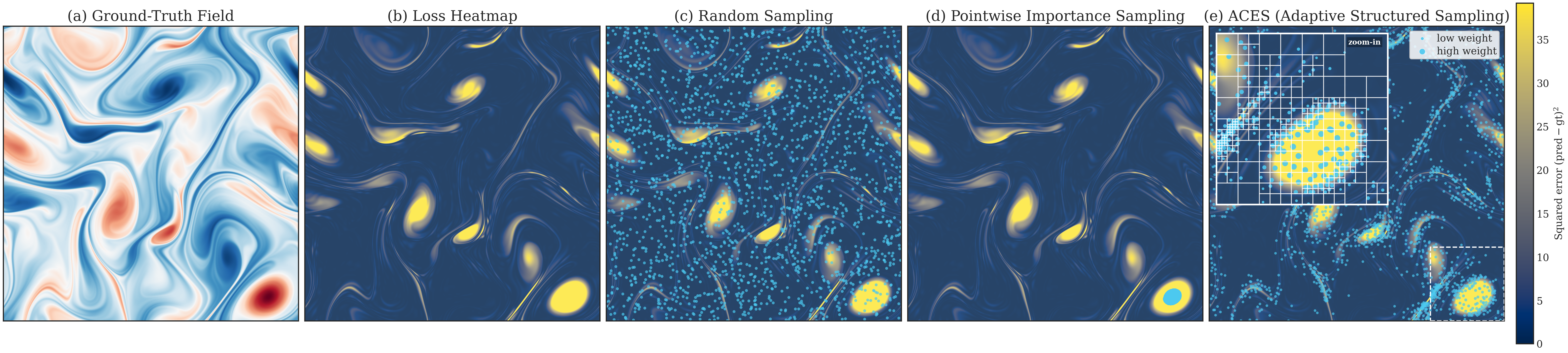}
  \caption{Qualitative comparison of sampling strategies on a Navier–Stokes vorticity field at a training iteration. Uniform sampling provides broad coverage but allocates many samples to low-error regions, while pointwise importance sampling over-concentrates in localized high-error areas. ACES performs adaptive region-level sampling, balancing domain coverage and focused refinement. Point size indicates the region-level sampling weight.}
  \label{fig:intro_sampling_comparison}
  \vspace{-5mm}
\end{figure}

\vspace{-2mm}
\section{Related Work}

\paragraph{Adaptive sampling for implicit neural representations.} 
Given the continuous coordinate-based nature of INRs, prior work has explored adaptive sampling for INR training by prioritizing samples with high residuals, uncertainty, or other local importance signals~\citep{zhang2024nonparametric, zhang2025evos, kheradmand2024accelerating, tack2023learning}. These approaches primarily operate at the individual point level and largely ignore higher-level spatial structure. ACES differs by using adaptive partitioning and region-level weighting to better balance coverage and focus.

\paragraph{Variance reduction through structured mini-batches.}
Variance reduction in stochastic optimization has been widely studied through importance sampling and structured mini-batch construction. Importance-sampling methods prioritize informative samples using smoothness, gradient norm, or loss-based criteria \citep{needell2014sgd,katharopoulos2018not}, while stratified sampling reduces variance by partitioning data into more homogeneous groups \citep{zhao2014accelerating}. However, these methods are primarily designed for fixed datasets with independently sampled examples and do not exploit the spatial structure of continuous neural fields. In contrast, ACES adaptively partitions continuous domains to jointly improve coverage and variance reduction.

\paragraph{Weighted sampling and controlled bias.}
Beyond unbiased variance reduction, recent studies have shown that optimization performance depends not only on raw gradient variance but also on the relationship between gradient variance and update quality~\citep{faghri2020study}. ACES performs adaptive region-level sampling over continuous domains and explicitly studies the role of controlled bias through region-level weighting and gradient-alignment analysis.

\section{Methodology}

\subsection{Problem Setting}
We consider the problem of learning a continuous field from a large set of pointwise evaluations. Let $\Omega \subset \mathbb{R}^d$ denote a spatial or spatiotemporal domain. We assume access to a dataset
$\mathcal D=\{(\mathbf x_i,\mathbf y_i)\}_{i=1}^N$
with $\mathbf x_i\in\Omega$ and $\mathbf y_i\in\mathbb R^m$, 
where $\mathbf{y}_i$ may represent physical quantities such as velocity, pressure, or vorticity. We aim to learn a coordinate-based model (implicit neural representation)
$
f_\theta: \Omega \to \mathbb{R}^m,
$
parameterized by $\theta$, that approximates the underlying continuous field. Consider the empirical risk:
\begin{equation}
\widehat{\mathcal{L}}(\theta) = \frac{1}{N} \sum_{(\mathbf{x}_i, \mathbf{y}_i) \in \mathcal{D}} \ell\big(f_\theta(\mathbf{x}_i), \mathbf{y}_i\big),
\end{equation}
where $\ell(\cdot)$ is a pointwise loss function. The full gradient is
$\mathbf g=\nabla_\theta\widehat{\mathcal L}(\theta)
=\frac1N\sum_{i=1}^N\mathbf g_i$,
where
$\mathbf g_i=\nabla_\theta\ell(f_\theta(\mathbf x_i),\mathbf y_i)$.
In many scientific applications, the dataset size $N$ can be extremely large, making full-gradient computation expensive in both computation and memory. In practice, we instead use a minibatch $\mathcal{B}$ of size $n \ll N$ at each iteration, yielding the stochastic gradient estimator:
\begin{equation}
\hat{\mathbf{g}}_{\mathrm{RS}} = \frac{1}{n} \sum_{i=1}^{n} \mathbf{g}_i.
\end{equation}

This estimator is effective when gradient contributions are relatively homogeneous across samples. However, in many physical systems, the learning signal is highly heterogeneous, leading to inefficient use of computation under uniform sampling.

To improve training efficiency, we consider sampling minibatches according to a non-uniform distribution. Let $q(i)$ denote the probability of selecting sample $i$ from the dataset $\mathcal{D}$. A general stochastic gradient estimator under distribution $q$ takes the form
\begin{equation}
\hat{\mathbf{g}}_{q}
=
\frac{1}{n} \sum_{i=1}^{n} \frac{w_i}{q(i)} \mathbf{g}_i,
\qquad i \sim q,
\end{equation}
where $w_i$ is a weight associated with sample $i$. When $w_i = \frac{1}{N}$, the estimator reduces to standard importance sampling and is an unbiased estimator of the full gradient. More generally, we allow $w_i$ to deviate from $\frac{1}{N}$, resulting in a biased estimator:
$\bar{\mathbf g}=\mathbb E[\hat{\mathbf g}_q]\neq\mathbf g$.
This perspective treats adaptive sampling not as a mechanism for strictly approximating the full gradient, but as a way to potentially accelerate optimization by prioritizing informative samples.


Existing adaptive sampling methods operate at the level of individual points by assigning importance scores to each $\mathbf{x}_i \in \mathcal{D}$, based on residual or gradient statistics. While effective in some settings, pointwise sampling may over-concentrate in localized regions, leading to redundant updates and poor domain coverage. It can also incur substantial overhead due to frequent importance estimation over large candidate sets. These limitations motivate region-level adaptive sampling.

\paragraph{ACES Overview} 
As illustrated in Fig.~\ref{fig:diagram}, ACES performs adaptive region-level sampling over the continuous domain. It begins with a coarse partition and recursively refines regions that remain highly variable, producing more homogeneous regions in which sampled points better represent their local neighborhoods. Sampling and weighting are then applied at the region level rather than independently per point, improving coverage while reducing redundant sampling in spatially correlated areas. Regions with greater remaining variability receive larger sampling budgets, while samples from regions with poorer predictions are assigned greater weights.

ACES uses a two-timescale implementation. 
The partition is updated periodically on a slow timescale, while region weights are updated every iteration using lightweight residual statistics. This is because the spatial structure of the loss field evolves gradually during training, while the sampling weights directly influence which regions contribute most strongly to the parameter update. This separation amortizes partitioning cost while allowing the sampling distribution to adapt online.

The following sections answer two questions: First, how should the domain be partitioned so that stochastic gradient estimates have low variance? Second, how should samples or weights be assigned across regions when the goal is not only unbiased gradient estimation, but also faster optimization? We address these questions through a variance-reduction analysis of adaptive partitioning, followed by a bias--variance analysis of region-level weighted sampling.

\begin{figure}[ht]
    \centering
    \includegraphics[width=0.8\textwidth]{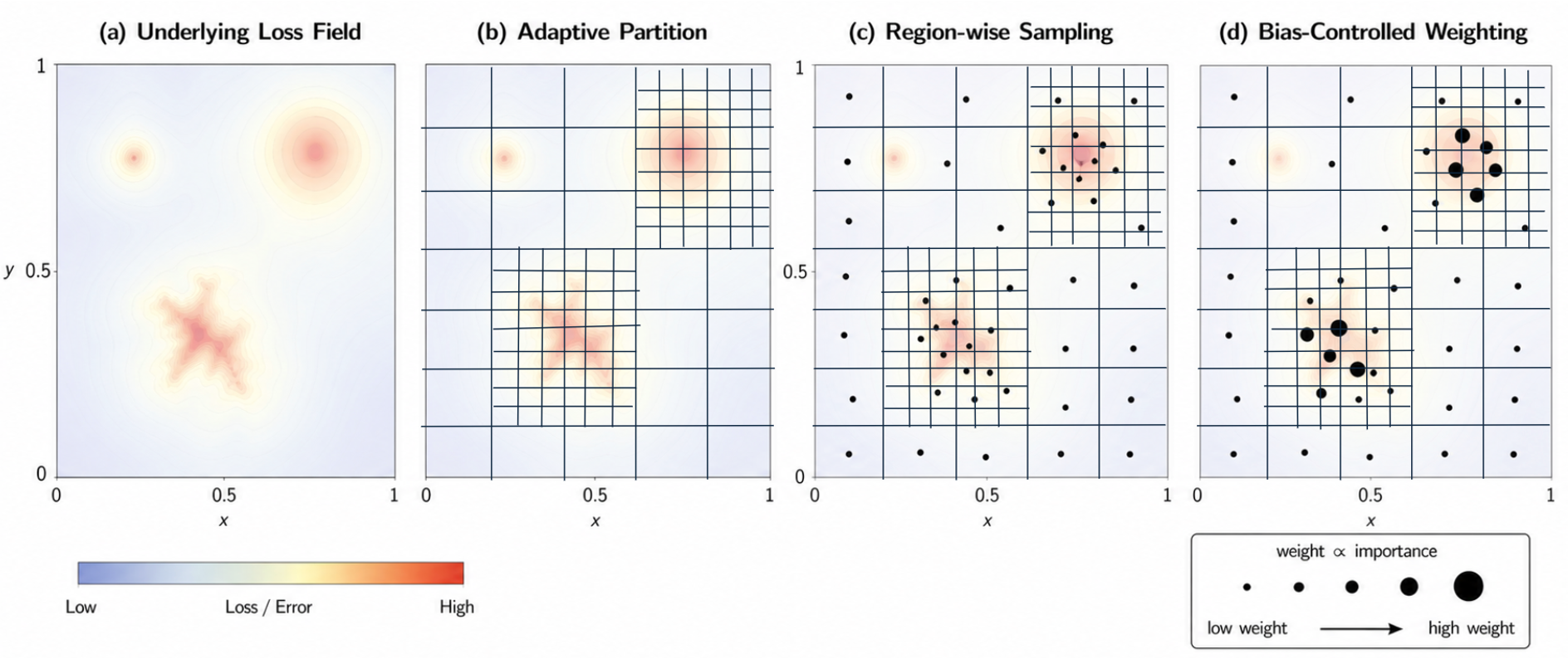}
    \caption{Overview of ACES. (a) The loss field is highly heterogeneous, with localized high-error regions.
(b) ACES builds an adaptive partition that allocates finer resolution to complex regions and coarser resolution elsewhere.
(c) Region-based sampling ensures structured coverage of the domain while avoiding redundant samples.
(d) Bias-controlled weighting emphasizes informative regions by assigning larger weights to high-error areas, leading to more effective optimization.}
    \label{fig:diagram}
\end{figure}

\subsection{Variance Reduction via Partitioning}

To connect the sampling strategy with optimization efficiency, we analyze the stochastic gradient estimators induced by different sampling schemes through a one-step optimization analysis.



Assume $\mathcal{L}$ is $\beta$-smooth, i.e.,
\(
\|\nabla \mathcal{L}(\theta_1) - \nabla \mathcal{L}(\theta_2)\| \le \beta \|\theta_1 - \theta_2\|, \, \forall \theta_1, \theta_2
\). For the update
$\theta^+ = \theta - \eta \hat{\mathbf g}$ with
$\mathbf g = \nabla \mathcal{L}(\theta)$, the standard smoothness inequality gives~\citep{zhao2014accelerating}:
\begin{equation}
\mathbb{E}[\mathcal{L}(\theta^+)]
\le
\mathcal{L}(\theta)
-
\eta \langle \mathbf g, \bar{\mathbf g} \rangle
+
\frac{\beta \eta^2}{2}
\mathbb{E}\|\hat{\mathbf g}\|^2,
\label{eq:one_step_improvement}
\end{equation}
where $\mathbb{E}(\cdot)$ denotes expectation over stochastic gradient and $\bar{\mathbf g}=\mathbb{E}[\hat{\mathbf g}]$.
Using
\begin{equation}
\mathbb{E}\|\hat{\mathbf g}\|^2
=
\|\bar{\mathbf g}\|^2
+
\mathbb{E}\|\hat{\mathbf g}-\bar{\mathbf g}\|^2,
\end{equation}
we see that in the unbiased case
($\bar{\mathbf g}=\mathbf g$), local optimization behavior is influenced primarily by the variance term
$\mathbb{E}\|\hat{\mathbf g}-\mathbf g\|^2$.
This motivates constructing partitions that reduce the variance of the stochastic gradient estimator.

Under uniform random sampling, the minibatch estimator has variance
\begin{equation}
\mathbb{E}\|\hat{\mathbf g}_{\mathrm{RS}}-\mathbf g\|^2
=
\frac{\sigma^2}{n},
\end{equation}
where $\sigma^2$ is the variance of per-sample gradients.

To reduce this variance, we partition the dataset into $K$ disjoint regions, where region $k$ contains $N_k$ samples and $\sum_{k=1}^K N_k = N$. 
From each region $k$, we draw $n_k$ samples, with $\sum_{k=1}^K n_k = n$. The resulting stratified gradient estimator is
\begin{equation}
\hat{\mathbf{g}}_{\mathrm{st}} 
= \sum_{k=1}^{K} \frac{N_k}{N} \cdot \frac{1}{n_k} 
\sum_{i \in \mathcal{B}_k} \mathbf{g}_i,
\end{equation}
where $\mathcal{B}_k \subset \mathcal{D}_k$ denotes the sampled subset from region $k$. 
This estimator is unbiased, and its variance is given by \citep{botev2017variance}:
\begin{equation}
\mathbb{E}\left[\|\hat{\mathbf{g}}_{\mathrm{st}} - \mathbf{g}\|^2\right] 
= \sum_{k=1}^{K} \left(\frac{N_k}{N}\right)^2 \frac{1}{n_k} \sigma_k^2,
\label{eq:stratified}
\end{equation}
where $\sigma_k^2$ denotes the within-region gradient variance. For a fixed partition, the optimal allocation that minimizes the estimator variance is given by the Neyman allocation:
\begin{equation}
n_k  \propto N_k \sigma_k, 
\end{equation}

Based on Eq.~\ref{eq:stratified}, variance reduction depends critically on constructing regions with low internal gradient heterogeneity. We therefore use an \emph{adaptive partitioning} strategy that recursively refines high-variance regions, producing more homogeneous regions and further reducing estimator variance.

\paragraph{Partition Rule.}
To guide adaptive refinement, we estimate the variance contribution of each region under stratified sampling. For a region $k$ containing $N_k$ samples and assigned sampling budget $n_k$, the contribution to the estimator variance is
\begin{equation}
\Delta V_k
=
\left(\frac{N_k}{N}\right)^2
\frac{1}{n_k}
\sigma_k^2.
\end{equation}

To relate adaptive refinement to variance reduction, we assume that gradients vary smoothly within a spatially coherent region, so that nearby samples produce more similar gradients than distant samples. Specifically, within region $k$, following a local variogram model for the gradient field~\citep{cressie2015statistics},
\begin{equation}
\mathbb{E}\|\mathbf{g}_i-\mathbf{g}_j\|^2
\approx
a_k\|\mathbf{x}_i-\mathbf{x}_j\|^\nu,
\qquad
\mathbf{x}_i,\mathbf{x}_j \in \mathcal{D}_k,
\end{equation}
where $a_k>0$ captures the local roughness of the sample-gradient field and $\nu>0$ describes how gradient discrepancy grows with spatial distance.
Under this scaling, subdividing a region reduces its characteristic spatial diameter and is therefore expected to reduce its within-region gradient variance. 
Thus, the expected benefit of refining region $k$ is proportional to its current variance contribution,
\begin{equation}
\Delta V_k - \Delta V_k^{\mathrm{split}}
\propto
\Delta V_k.
\end{equation}
The proof of this relationship is included in Appendix~\ref{app:variance_reduction}.
Under Neyman allocation, the sampling budget satisfies $n_k \propto N_k\sigma_k$. 
Substituting this relation into $\Delta V_k$ gives
\begin{equation}
\Delta V_k
\propto
\frac{N_k^2}{N^2}
\frac{\sigma_k^2}{N_k\sigma_k}
=
\frac{1}{N^2}N_k\sigma_k.
\end{equation}
This motivates the refinement score
\begin{equation}
s_k = N_k\sigma_k,
\end{equation}
and ACES recursively subdivides regions with large $s_k$. 
The criterion favors regions of both large size (large number of points) and internally heterogeneous regions, making it a practical proxy for potential variance reduction.

\subsection{Bias--Variance Tradeoff and Acceleration}
\label{sec:bias_variance_tradeoff}

We now consider an additional mechanism for improving optimization efficiency: region-level weighting. Rather than minimizing the variance of an unbiased gradient estimator, we allow biased weights that emphasize informative regions and aim to yield larger expected local improvement.

Let
$
p_k = \frac{N_k}{N}$, 
$\bar{\mathbf{g}}_k
=
\frac{1}{N_k}
\sum_{i\in \mathcal{D}_k}
\mathbf{g}_i
$ and $
\hat{\mathbf{g}}_k
=
\frac{1}{n_k}
\sum_{i\in \mathcal{B}_k}
\mathbf{g}_i ,
$
where $\bar{\mathbf{g}}_k$ is the mean gradient in region $k$ and $\hat{\mathbf{g}}_k$ is its minibatch estimate.
We define the weighted stratified estimator as
\begin{equation}
\hat{\mathbf{g}}_{\mathrm{w}}
=
\sum_{k=1}^{K}
p_k w_k \hat{\mathbf{g}}_k
=
\sum_{k=1}^{K}
\frac{N_k}{N}
\frac{w_k}{n_k}
\sum_{i\in \mathcal{B}_k}
\mathbf{g}_i .
\label{eq:biased_estimator}
\end{equation}

For nonuniform $w_k$, the estimator is generally biased:
\(
\bar{\mathbf{g}}_{\mathrm{w}}
=
\mathbb{E}[\hat{\mathbf{g}}_{\mathrm{w}}]
=
\sum_{k=1}^{K}
p_k w_k \bar{\mathbf{g}}_k
\neq
\mathbf{g}.
\)

To understand the effect of this bias, we rewrite the one-step smoothness bound in Eq.~\ref{eq:one_step_improvement} as

\begin{equation*}
\mathbb{E}[\mathcal{L}(\theta^+)]
\le \mathcal{L}(\theta)-\eta_w S_w
+\tfrac{\beta\eta_w^2}{2}G_w,
\quad \text{where }\;
S_w:=\langle\mathbf{g},\bar{\mathbf{g}}_w\rangle,
\quad
G_w:=\mathbb{E}\|\hat{\mathbf{g}}_w\|^2.
\end{equation*}
Here, $S_w$ measures the alignment between the expected weighted estimator and the full gradient, while $G_w$ measures the estimator energy. For $S_w>0$ and $G_w>0$, minimizing the bound over $\eta_w$ yields:
\begin{equation}
\mathbb{E}[\mathcal{L}(\theta^+)]
\le
\mathcal{L}(\theta)
-
\frac{S_w^2}{2\beta G_w}.
\end{equation}
Thus, the ratio
\vspace{-2mm}
\begin{equation}
\frac{S_w^2}{G_w}
\label{eq:alignment_energy_ratio}
\end{equation}
acts as a local proxy for the expected one-step improvement, which can be used to compare biased gradient estimators.

To expose the tradeoff between directional alignment and relative
variance, we rewrite the improvement proxy as
\begin{equation}
\frac{S_w^2}{G_w}
=
\frac{
\left\langle
\mathbf{g},
\bar{\mathbf{g}}_w/\|\bar{\mathbf{g}}_w\|
\right\rangle^2
}{
1+
\mathbb{E}\|\hat{\mathbf{g}}_w-\bar{\mathbf{g}}_w\|^2
/\|\bar{\mathbf{g}}_w\|^2
}
=
\|\mathbf{g}\|^2
\frac{\cos^2\phi_w}{1+\rho_w}.
\end{equation}
Here, $\phi_w$ is the angle between the nonzero vectors
$\mathbf{g}$ and $\bar{\mathbf{g}}_w$, and
$\rho_w :=
\mathbb{E}\|\hat{\mathbf{g}}_w-\bar{\mathbf{g}}_w\|^2
/\|\bar{\mathbf{g}}_w\|^2$
is the relative variance.

At a fixed checkpoint, $\|\mathbf{g}\|^2$ is common to all estimators, so the proxy depends only on directional alignment and relative variance. For $S_w>0$, unbiased estimation maximizes the directional term, but need not minimize relative variance. A biased estimator can therefore achieve a higher proxy value if its reduction in relative variance outweighs the loss in squared directional alignment. We use inexpensive residual statistics to construct practical weights and analyze the resulting weighting family directly below.

\subsection{Practical Implementation}

The analysis above motivates two components of ACES: adaptive partitioning based on region-level gradient variability and region-level weighting based on informativeness. In practice, directly estimating gradient statistics for all regions is computationally expensive. We therefore use lightweight residual-based approximations that preserve the main design principles while remaining efficient.

\paragraph{Residual statistics as proxies for gradient variance.}

The partitioning criterion depends on the region score $N_k\sigma_k$, where $\sigma_k$ denotes the within-region gradient standard deviation. Since estimating $\sigma_k$ requires per-sample gradients, we instead approximate gradient variability using residual statistics.

For the scalar-output squared-loss setting used in our experiments, let $\ell_i=\tfrac12 r_i^2$, where
$
r_i = f_\theta(\mathbf x_i)-y_i,
$
and 
$
J_i = \nabla_\theta f_\theta(\mathbf x_i).
$
Then the per-sample gradient is
\[
\mathbf g_i = J_i^\top r_i .
\]
In this section, we denote the total variance of $\mathbf g_i$ within region $k$ as $\sigma_{g,k}^2$, which can be approximated as
\[
\sigma_{g,k}^2
\approx
\|\bar J_k\|_2^2\,\sigma_{r,k}^2
+
\bar r_k^2\,\sigma_{J,k}^2,
\]
where
$
\sigma_{r,k}^2
=
\mathbb E\!\left[
(r_i-\bar r_k)^2
\right]
$
is the variance of the residual $r_i$, and
$\sigma_{J,k}^2 = \operatorname{tr}(\operatorname{Cov}(J_i))$
is the variance of the Jacobian $J_i$ within the region. Here,
$\bar r_k = \mathbb{E}[r_i]$
and
$\bar J_k = \mathbb{E}[J_i]$
denote the mean residual and mean Jacobian over region $k$, respectively. This approximation is obtained by neglecting the cross term in the variance decomposition; the scalar-output derivation is provided in Appendix~\ref{app:grad_variance}. For vector outputs, an analogous decomposition uses residual and Jacobian covariance matrices. If we further assume within-region smoothness such that $J_i$ varies slowly in region $k$, we can ignore Jacobian variation inside the region and obtain
\(
\sigma_{g,k}^2
\approx
\|\bar J_k\|_2^2\,\sigma_{r,k}^2.
\)

In practice, we estimate the variance of the residual $\widehat{\sigma}_{r,k}^2$ within each region using Monte Carlo samples. 
We then define the practical partition score as
\begin{equation}
s_k^{\mathrm{part}}
=
N_k
\widehat{\sigma}_{r,k}
\label{eq:practical_partition_score}
\end{equation}
Regions with larger $s_k^{\mathrm{part}}$ are selected for refinement. 
This criterion favors regions that are both large and difficult to approximate, while avoiding the cost of computing explicit gradient variance.




\paragraph{Region-level weighting for biased sampling.}

The theoretical criterion in Eq.~\ref{eq:alignment_energy_ratio} suggests that regions should be emphasized when their gradients are well aligned with the full-gradient direction and do not introduce excessive variance. However, the full gradient is unavailable in stochastic training. We therefore use region-level statistics to design the weights. Intuitively, weights should be large for regions with a large mean residual $\bar{\mathbf r}_k$ and small variance $\sigma^2_{r,k}$. However, accurately estimating $\sigma^2_{r,k}$ at the sampling stage in every training iteration would require evaluating multiple samples per region, which introduces substantial overhead and becomes unreliable when some regions receive only a small number of samples. Therefore, we use only $\bar{\mathbf r}_k$ as a lightweight proxy for informativeness.
We define the unnormalized region weight as
\[
\tilde w_k
=
|\bar r_k|^\alpha,
\]
where $\alpha\ge 0$ controls the strength of the sampling bias. 
The normalized weight is
\begin{equation}
w_k
=
\frac{
|\bar r_k|^\alpha
}{
\sum_{j=1}^{K} p_j|\bar r_j|^\alpha
},
\label{eq:practical_region_weight}
\end{equation}
so that $\sum_{k=1}^{K} p_k w_k = 1$. When $\alpha=0$, Eq.~\eqref{eq:practical_region_weight} recovers the unbiased weighting $w=\mathbf{1}$. For this weighting family, we show in Appendix~\ref{app:alpha_monotone}
that
\begin{equation}
\left.
\frac{dF(w(\alpha))}{d\alpha}
\right|_{\alpha=0}
=
C\sum_{k=1}^{K}
t_k\log|\bar r_k|,
\qquad C>0,
\label{eq:residual_weight_derivative}
\end{equation}
where $t_k$ measures the relative alignment--variance advantage of
region $k$ (defined in Appendix~\ref{app:alpha_monotone}).
Hence, when the right-hand side is positive, residual-based weighting
follows a local ascent direction of $F$, and a sufficiently small
$\alpha>0$ strictly improves the proxy over unbiased weighting.
We verify this condition empirically and find
$\left.dF/d\alpha\right|_{\alpha=0}>0$ at all evaluated training
checkpoints (Appendix~\ref{app:alpha_monotone_empirical}).


Overall, ACES combines a slow-timescale adaptive partition with fast-timescale region-level weighting. The partition step improves sample representativeness by reducing within-region heterogeneity, while the weighting step accelerates optimization by biasing computation toward informative regions.

\section{Experiments}
We evaluate ACES on spatial and joint spatiotemporal field reconstruction tasks with heterogeneous structure. Our experiments examine sample efficiency, the trade-off between convergence and wall-clock cost, the complementary roles of adaptive partitioning and residual-aware weighting, and generalization from independent snapshots to joint space-time reconstruction.

\paragraph{Datasets and tasks.}
For spatial reconstruction, we consider Navier--Stokes vorticity fields at Reynolds number $\mathrm{Re}=10{,}000$ (NS2D; $1024\times1024$) and boiling-process fields from BubbleML~\citep{hassan2023bubbleml} (PoolBoiling2D; $384\times384$). The former contain multiscale vortices and thin shear layers, while the latter contain sparse localized interfaces in relatively smooth backgrounds. We uniformly select five temporal snapshots from each simulation trajectory and fit each snapshot independently. For joint spatiotemporal reconstruction, we train a single INR over continuous $(x,y,t)$ coordinates spanning 64 consecutive frames: PoolBoiling3D contains $384\times384\times64$ points, and NS3D contains $512\times512\times64$ points. Here, the third coordinate is time.

\paragraph{Sampling baselines.}
We compare ACES with uniform random sampling, Implicit Neural Teaching (INT)~\citep{zhang2024nonparametric}, EVOlutionary Selector (EVOS)~\citep{zhang2025evos}, and \emph{Fixed Stratified}, which uses a static uniform grid with a matched stratified allocation to isolate the benefit of adaptive partitioning. In the joint spatiotemporal experiments, EVOS is omitted because the evaluated implementation is not directly applicable to this setting.

\paragraph{Evaluation protocol.}
Our main experiments use a SIREN~\citep{sitzmann2020implicit} backbone and SGD. The spatial experiments sample a fraction $2\times10^{-3}$ of the candidate points per iteration, while the larger joint spatiotemporal experiments use $7\times10^{-4}$. Learning rates are selected by a coarse search for each dataset and method. Because each task reconstructs a known field, training draws coordinates from that field and evaluation computes relative mean squared error (relative MSE) over the full coordinate grid; no conventional train/test split is used. Relative MSE is normalized by the squared norm of the ground-truth field. Results are averaged over three random seeds, and tables report mean $\pm$ standard deviation. Appendix~\ref{sec:appendix_ffn_backbone} evaluates an additional Fourier Feature Network backbone; further implementation and training details are provided in Appendix~\ref{sec:appendix_experiment_setting_hardware}.

\subsection{Reconstruction Efficiency}
\paragraph{Spatial reconstruction.}
Figure~\ref{fig:exp_sampler_efficiency_combined} compares reconstruction error against training iterations and wall-clock time. ACES achieves the lowest final relative MSE on both datasets under both evaluation budgets. At equal iteration budgets, it outperforms uniform random sampling, fixed stratification, and the pointwise adaptive baselines, with particularly pronounced gains on PoolBoiling2D.

These improvements also translate into wall-clock benefits. On NS2D, uniform sampling initially reduces error faster, but ACES overtakes it later in training despite its sampling overhead. On PoolBoiling2D, ACES establishes an early advantage and maintains substantially lower error throughout most of training. Fixed stratification performs similarly to uniform sampling per iteration but offers no wall-clock advantage. The larger gains on PoolBoiling2D are consistent with the benefits of adaptive region-level sampling in fields with localized interfaces.

A detailed runtime breakdown on NS2D is reported in Appendix~\ref{sec:appendix_runtime_sensitivity}. It separates model training, sampling, and amortized partition-update costs under the current ACES configuration.


\begin{figure*}[t]
  \centering
\vspace{-1.5em}
  \begin{subfigure}[t]{0.89\textwidth}
    \centering
    \includegraphics[width=0.25\linewidth]{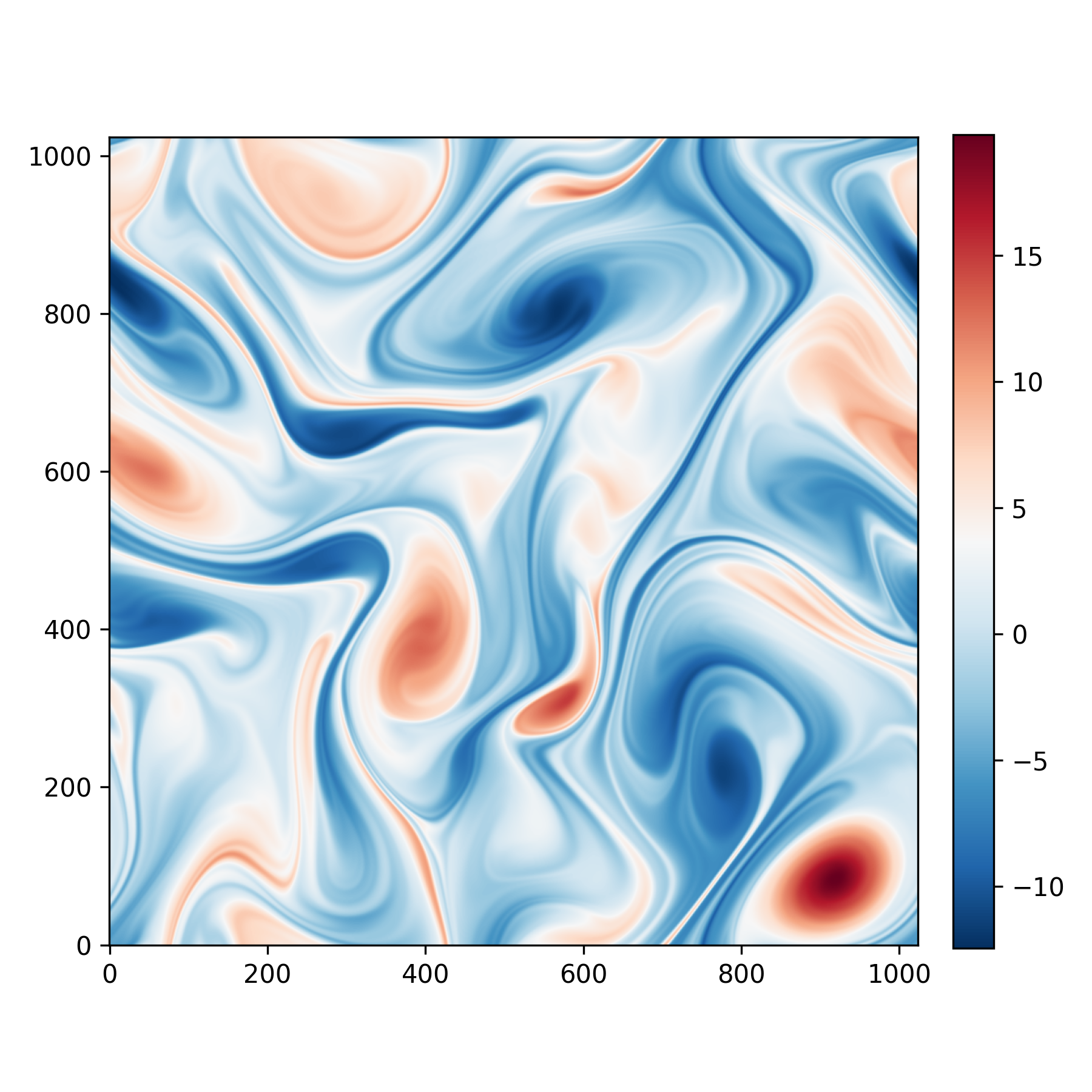}
        \hfill
    \includegraphics[width=0.35\linewidth]{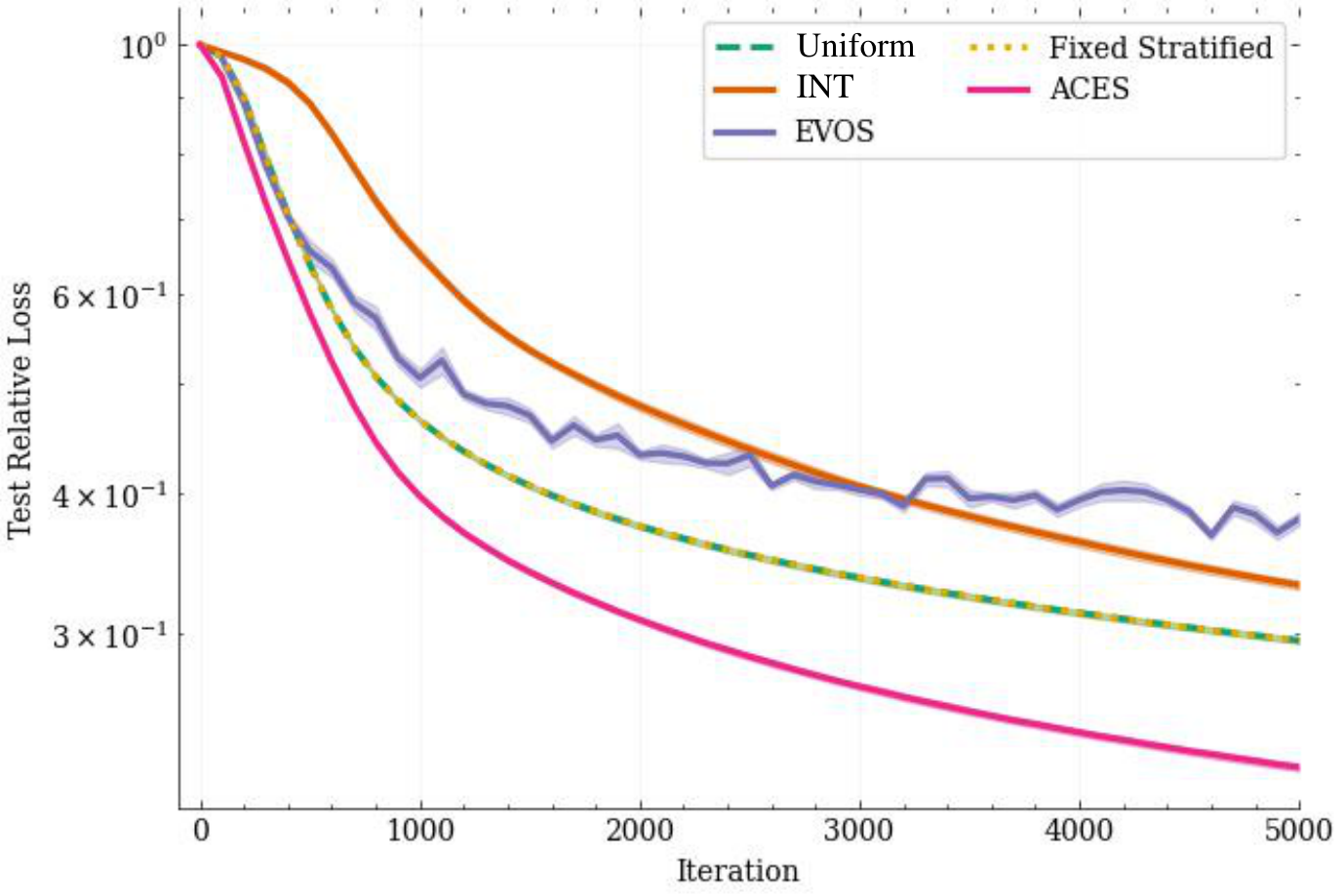}
    \hfill
    \includegraphics[width=0.35\linewidth]{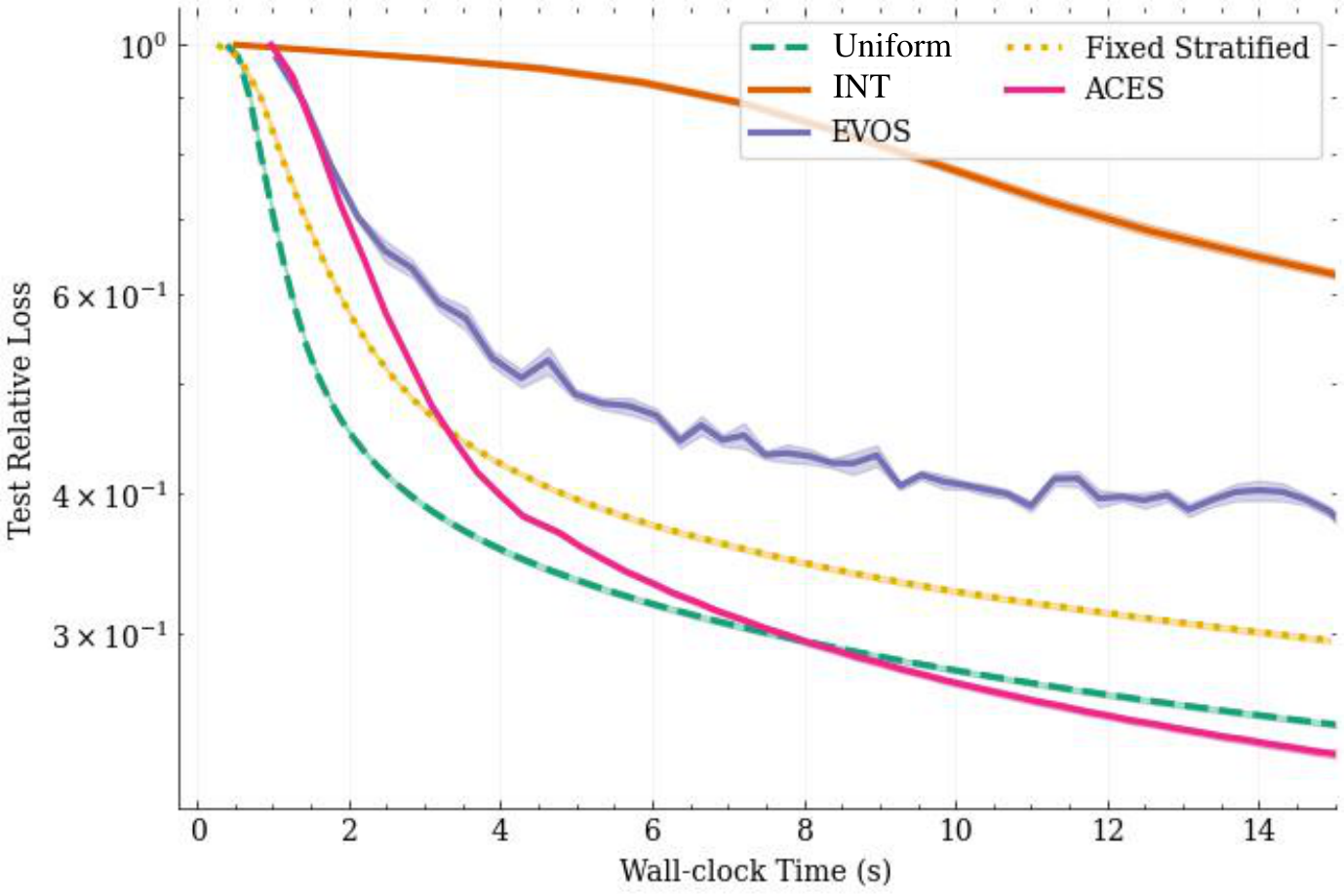}
    \caption{NS2D:distributed structures}
    \label{fig:ns1024_efficiency}
  \end{subfigure}

  \vspace{-0.5em}

  \begin{subfigure}[t]{0.89\textwidth}
    \centering
    \includegraphics[width=0.25\linewidth]{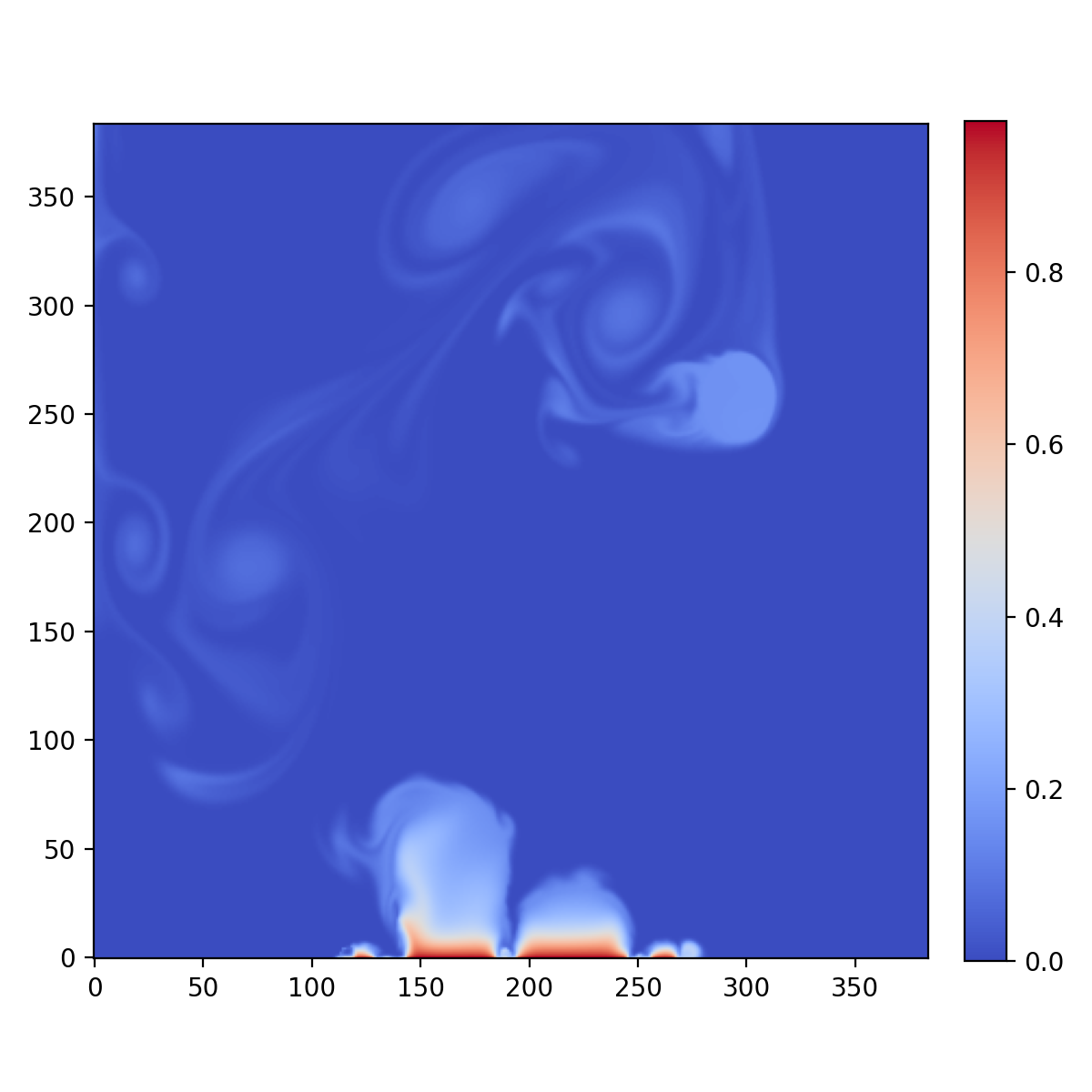}
        \hfill
    \includegraphics[width=0.35\linewidth]{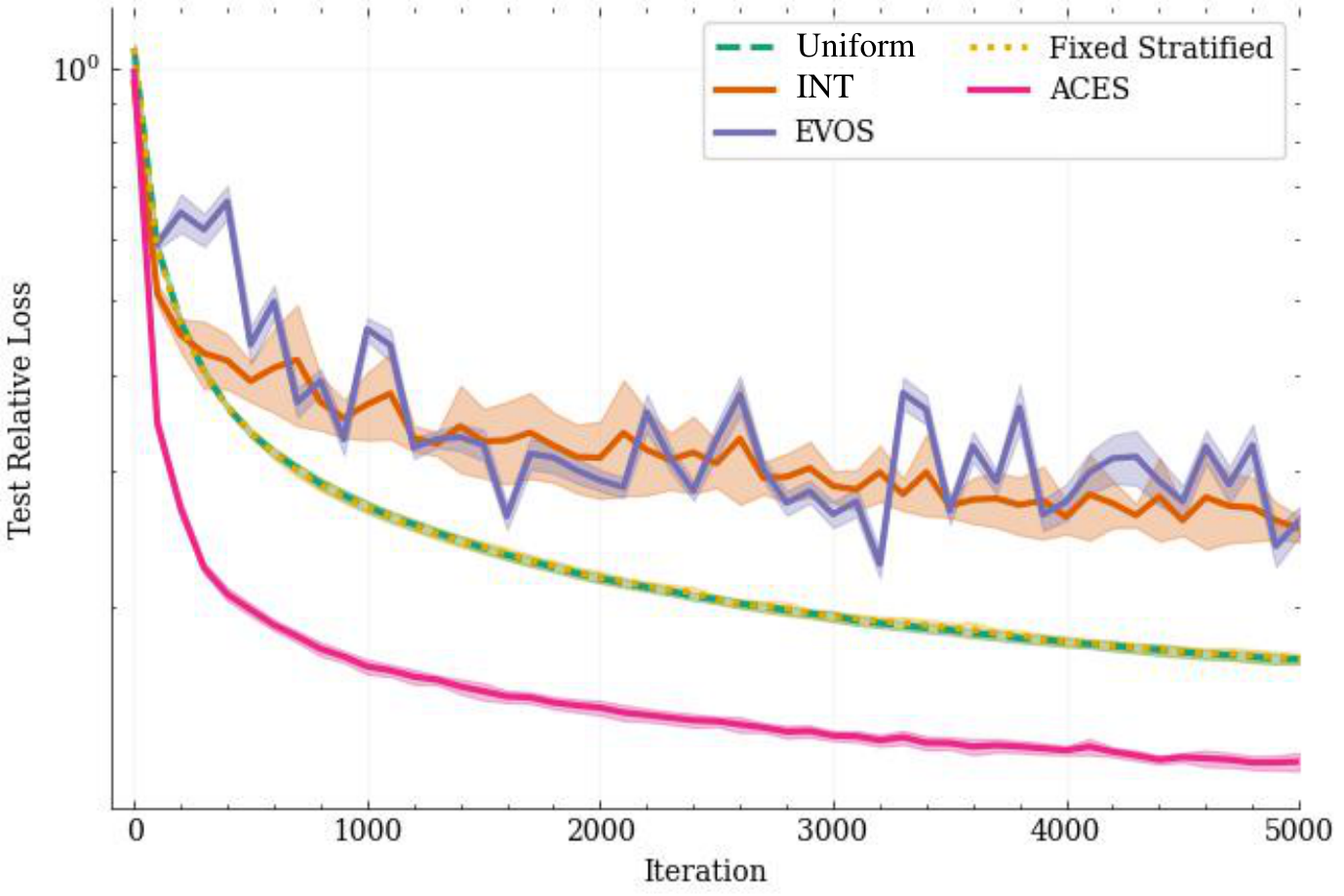}
    \hfill
    \includegraphics[width=0.35\linewidth]{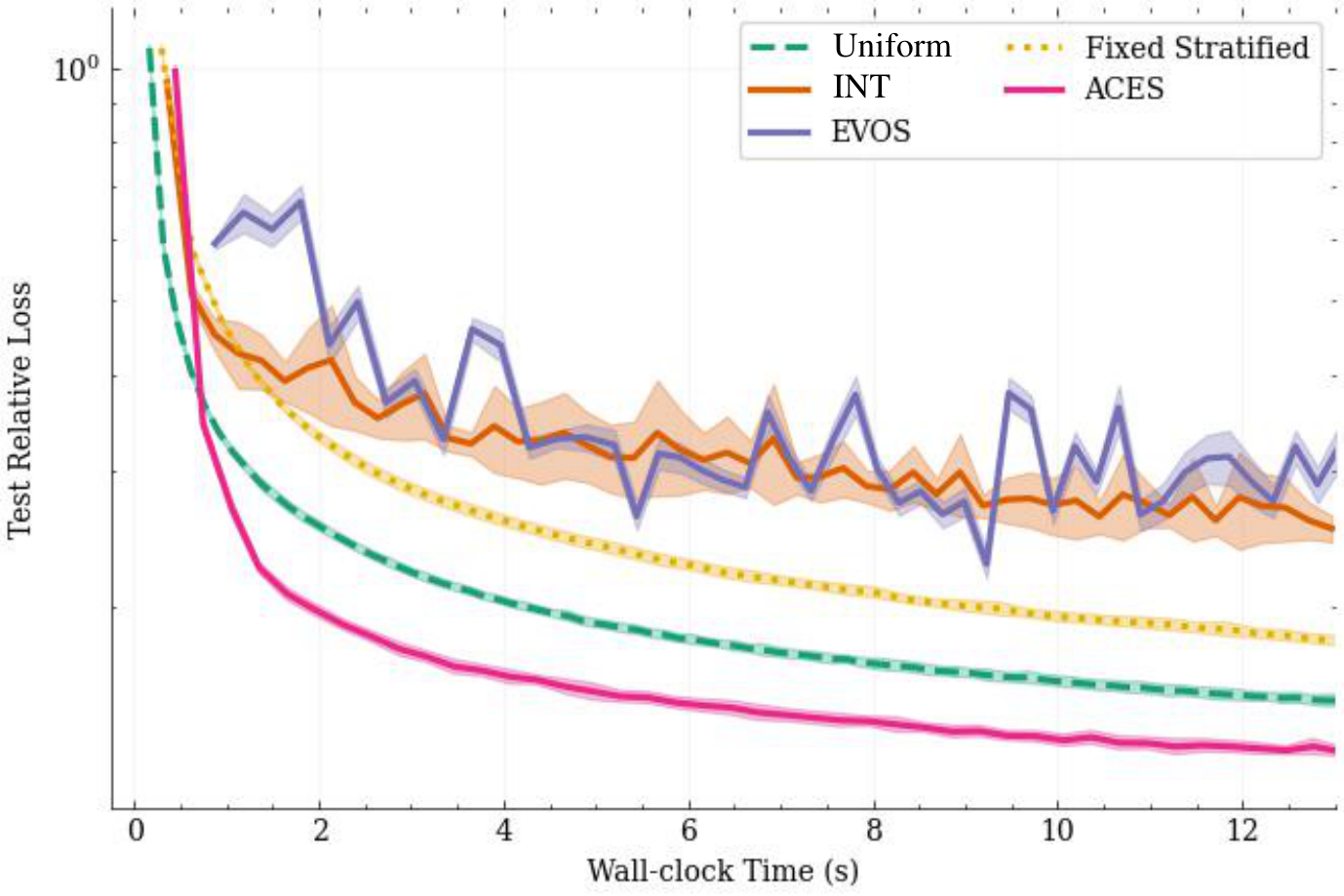}
    \caption{PoolBoiling2D: localized interfaces}
    \label{fig:pool_efficiency}
  \end{subfigure}

  \caption{
  \textbf{Training efficiency comparison.} Left: one representative snapshot per dataset. Middle: full-grid evaluation relative MSE versus iteration. Right: full-grid evaluation relative MSE versus runtime. ACES achieves substantially faster convergence per iteration than Uniform, INT, and EVOS on both datasets. Under equal runtime budgets, ACES remains competitive on NS2D and achieves the largest gains on PoolBoiling2D, where informative regions are highly localized.
  }
  \label{fig:exp_sampler_efficiency_combined}
\end{figure*}

\clearpage
\begin{table*}[t]
\centering
\caption{\textbf{Joint spatiotemporal reconstruction on PoolBoiling3D and NS3D.} All methods are evaluated at matched training checkpoints over 64 consecutive frames. Relative MSE (lower is better) is reported as mean $\pm$ standard deviation over three random seeds. Within each dataset, bold denotes the lowest mean error at each checkpoint. INT results are reported through 1,200 epochs.}
\label{tab:spatiotemporal_sampling_comparison}
\small
\setlength{\tabcolsep}{4pt}
\begin{tabular}{@{}lcccc@{}}
\toprule
\textbf{Method} & \textbf{200 epochs} & \textbf{1,200 epochs} & \textbf{2,400 epochs} & \textbf{5,000 epochs} \\
\midrule
\multicolumn{5}{@{}l}{\textit{PoolBoiling3D}} \\
INT                       & $0.8378 \pm 0.1320$ & $0.2600 \pm 0.0026$ & --- & --- \\
Uniform                   & $0.2948 \pm 0.0012$ & $0.1396 \pm 0.0014$ & $0.1133 \pm 0.0004$ & $0.0914 \pm 0.0016$ \\
Fixed Stratified          & $0.2760 \pm 0.0042$ & $0.1359 \pm 0.0085$ & $0.1231 \pm 0.0099$ & $0.0914 \pm 0.0043$ \\
ACES (partition)          & $\mathbf{0.2484 \pm 0.0073}$ & $\mathbf{0.1181 \pm 0.0010}$ & $\mathbf{0.0941 \pm 0.0011}$ & $\mathbf{0.0718 \pm 0.0007}$ \\
\midrule
\multicolumn{5}{@{}l}{\textit{NS3D}} \\
INT                       & $0.5378 \pm 0.0013$ & $0.3448 \pm 0.0110$ & --- & --- \\
Uniform                   & $0.3170 \pm 0.0004$ & $0.2120 \pm 0.0017$ & $0.1826 \pm 0.0017$ & $0.1572 \pm 0.0018$ \\
Fixed Stratified          & $\mathbf{0.3163 \pm 0.0008}$ & $0.2132 \pm 0.0008$ & $0.1837 \pm 0.0007$ & $0.1584 \pm 0.0008$ \\
ACES (partition)          & $0.3171 \pm 0.0020$ & $\mathbf{0.2049 \pm 0.0018}$ & $\mathbf{0.1763 \pm 0.0021}$ & $\mathbf{0.1520 \pm 0.0016}$ \\
\bottomrule
\end{tabular}
\end{table*}

\paragraph{Joint spatiotemporal reconstruction.}
Table~\ref{tab:spatiotemporal_sampling_comparison} reports reconstruction error on PoolBoiling3D and NS3D across matched training checkpoints (200--5,000 epochs), evaluated identically for both datasets. In this higher-dimensional setting, we evaluate the adaptive-partitioning component of ACES without residual-powered loss weighting. It achieves the lowest mean relative MSE at nearly every checkpoint, reducing error relative to uniform sampling by 21.4\% and 3.3\%, respectively, by 5,000 epochs. The sole exception is NS3D's earliest checkpoint (200 epochs), where Fixed Stratified is marginally better; adaptive partitioning overtakes it from 1,200 epochs onward. Fixed Stratified otherwise performs similarly to uniform sampling throughout. These results show that the coverage component of ACES extends beyond individual snapshots.

\subsection{Ablation Study}
We perform ablation studies on the PoolBoiling2D dataset to evaluate the contributions of adaptive partitioning and region-level weighting through three variants: (1) Fixed + Weighting, which applies residual-aware region weighting on top of the fixed partition, (2) Partition Only (Unbiased), which uses adaptive partitioning without residual-aware weighting, and (3) Full ACES, which combines adaptive partitioning with residual-aware weighting.

Figure~\ref{fig:ablation_ns1024_components} shows that both adaptive partitioning and residual-aware weighting contribute to the final performance. Adaptive partitioning alone already improves convergence over Fixed + Weighting, indicating that structured coverage and reduced sampling redundancy are important for efficient optimization. Adding residual-aware weighting further improves convergence speed, suggesting that adaptive partitioning and focused weighting provide complementary benefits.

Appendix~\ref{sec:appendix_runtime_sensitivity} additionally studies sensitivity to the weighting exponent $\alpha$ and partition-update interval.

\begin{figure*}[t]
  \centering
  \begin{minipage}[t]{0.42\textwidth}
    \centering
    \includegraphics[width=0.95\linewidth]{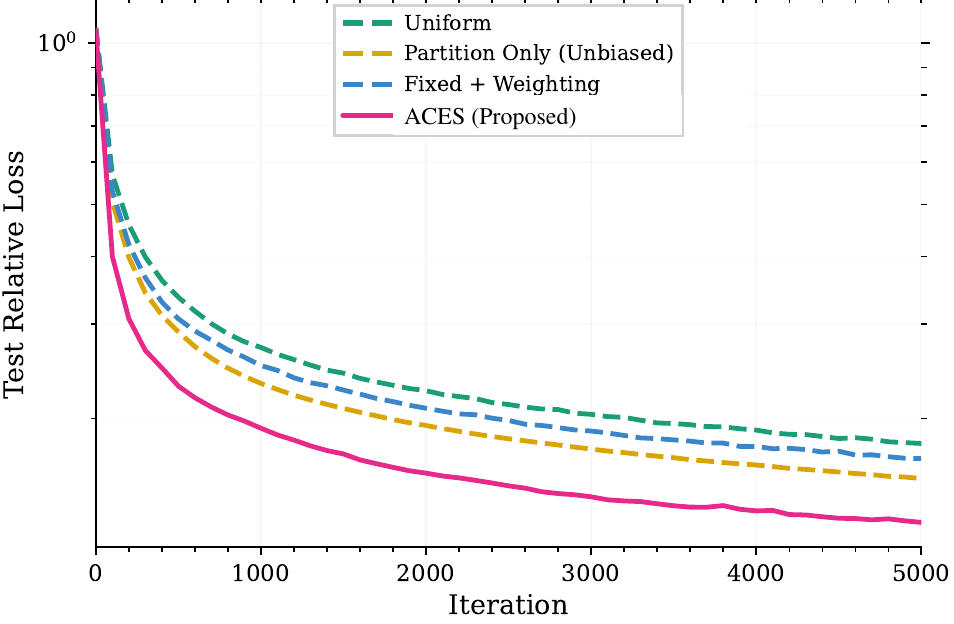}
    \captionof{figure}{\textbf{Component ablation.} We compare region-level weighting with a fixed partition, adaptive partitioning without weighting, and full ACES on PoolBoiling2D. Combining adaptive partitioning and residual-aware weighting yields the fastest convergence, indicating that the two components provide complementary benefits.}
    \label{fig:ablation_ns1024_components}
  \end{minipage}
  \hfill
  \begin{minipage}[t]{0.55\textwidth}
    \centering
    \includegraphics[width=0.95\linewidth]{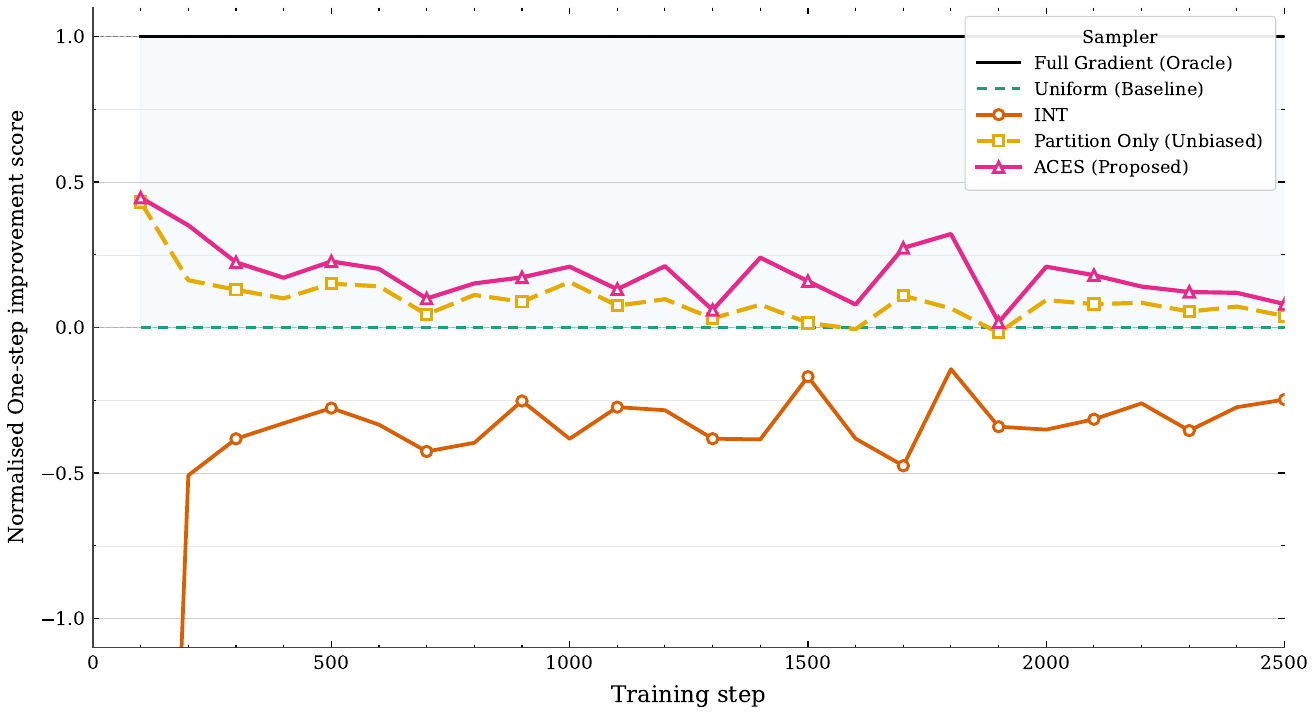}
    \caption{
        \textbf{Empirical Validation of the Optimization Proxy.}
        Normalized values of the one-step improvement proxy
        $S_w^2/G_w$ across training iterations. Higher values indicate better predicted optimization improvement. ACES achieves consistently higher scores than unbiased partitioning and uniform sampling, consistent with its faster empirical convergence.
        }
    \label{fig:grad_alignment_third_exp}
  \end{minipage}
\vspace{-2mm}
\end{figure*}


\subsection{Empirical Correlation with Optimization Behavior}
To evaluate whether the proposed optimization proxy reflects practical training behavior, we compare different sampling strategies using the bound-derived quantity $S_w^2/G_w$ from Section~\ref{sec:bias_variance_tradeoff}, where $S_w$ measures alignment with the full gradient and $G_w$ captures the estimator energy. We report a normalized version of this quantity using uniform sampling and the full-gradient oracle as reference points, where higher values indicate better predicted one-step optimization improvement.

To ensure a controlled comparison, all samplers are evaluated on the same set of checkpoints obtained from uniform-sampling training. At each checkpoint, we estimate the stochastic gradient statistics using six independently sampled minibatches for each sampling strategy, and evaluate the corresponding proxy values using the full gradient.

Figure~\ref{fig:grad_alignment_third_exp} shows that the ordering induced by the proxy is consistent with empirical convergence: methods with higher proxy values generally converge faster and achieve lower final error. In particular, ACES consistently achieves higher scores than both the Partition Only (Unbiased) variant and uniform sampling, while Partition Only also outperforms uniform sampling. These results support the theoretical motivation of ACES, suggesting that the proposed proxy captures a meaningful aspect of optimization efficiency.

Additional analyses of the proxy quality, including rank-correlation and top-$K$ overlap evaluations, are provided in Appendix~\ref{sec:appendix_proxy_validity}.

\section{Conclusion}
We introduced ACES, a region-level adaptive sampling framework for efficient training of implicit neural representations on heterogeneous continuous fields. ACES combines adaptive spatial partitioning with residual-aware weighting to balance domain coverage and focused refinement. Our analysis shows that adaptive partitioning reduces stochastic gradient variance through increased within-region homogeneity, while biased region-level weighting can improve optimization efficiency. Experiments on scientific field learning tasks demonstrate faster convergence and improved sample efficiency over uniform and pointwise adaptive sampling baselines, particularly in settings with highly localized complexity. The current implementation incurs additional sampling overhead, and the analysis characterizes local one-step behavior rather than long-term nonconvex dynamics. These results suggest that structured region-level sampling is a promising direction for efficient neural field learning.

\paragraph{Code availability.}
The source code and reproduction scripts are publicly available at \url{https://github.com/guangzhao27/ACES-Adaptive-Coverage-aware-Efficient-Sampling}.

\section*{Acknowledgments and Disclosure of Funding}
\textbf{Funding.} This material is based upon work supported by the U.S. Department of Energy, Office of Science, Office of Advanced Scientific Computing Research and Office of Biological and Environmental Research, via the RAPIDS Institute under the Scientific Discovery through Advanced Computing (SciDAC) program and the project of Improving Projections of AMOC and its Collapse Through Advanced Simulations (ImPACTS), under Contract No. DE-SC0012704 and FWP No. CC122 and CC127, and award B\&R-KJ0401010 and FWP-CC147. \\
\textbf{Competing interests.} The authors declare no competing interests.

\bibliographystyle{plainnat}
\bibliography{reference}


\appendix

\section{Proof of Variance Reduction}
\label{app:variance_reduction}

We show that, under a local variogram model for the gradient field, refining a
$d$-dimensional cell uniformly into $m^d$ subcells reduces its contribution to the
sampling variance by a fixed multiplicative factor. This justifies choosing the
cell with the largest current variance contribution for refinement.

\paragraph{Setup.}
Let $\mathcal D_k$ be a hypercubic cell with side length $h_k$, containing $N_k$
points and allocated $n_k$ samples. Its contribution to the stratified sampling
variance is
\begin{equation}
\Delta V_k
=
\left(\frac{N_k}{N}\right)^2
\frac{\sigma_k^2}{n_k},
\end{equation}
where $\sigma_k^2$ denotes the within-cell variance of the gradient field.

Suppose $\mathcal D_k$ is split into $m^d$ congruent hypercubic subcells
$\{\mathcal D_{k,i}\}_{i=1}^{m^d}$. Under a uniform refinement, each subcell
contains
\begin{equation}
N_{k,i} = \frac{N_k}{m^d},
\qquad
n_{k,i} = \frac{n_k}{m^d},
\qquad i = 1, \dots, m^d,
\end{equation}
and has side length $h_k / m$.

\paragraph{Local variogram model.}
Assume that within cell $k$, the gradient field satisfies the local variogram
relation \citep{cressie2015statistics}
\begin{equation}
\frac{1}{2}\mathbb{E}\big[\|\mathbf g(x)-\mathbf g(y)\|^2\big]
=
a_k \|x-y\|^\nu,
\qquad 0 < \nu \le 2,
\end{equation}
where $a_k$ may depend on the cell, while the exponent $\nu$ is shared across
cells. Using the standard identity
\begin{equation}
\sigma_k^2
=
\frac{1}{2}
\mathbb{E}_{x,y \in \mathcal D_k}
\big[
\|\mathbf g(x)-\mathbf g(y)\|^2
\big],
\end{equation}
we obtain
\begin{equation}
\sigma_k^2
=
a_k
\mathbb{E}_{x,y \in \mathcal D_k}
\big[
\|x-y\|^\nu
\big].
\end{equation}
For a hypercubic cell of side length $h_k$, the expectation above scales as
$h_k^\nu$, so
\begin{equation}
\sigma_k^2 \propto a_k h_k^\nu.
\end{equation}

Similarly, each refined subcell $\mathcal D_{k,i}$ has side length $h_k / m$,
and therefore
\begin{equation}
\sigma_{k,i}^2
\propto
a_k \left(\frac{h_k}{m}\right)^\nu
=
m^{-\nu} \sigma_k^2.
\end{equation}

\paragraph{Variance after refinement.}
The total variance contribution of the refined cell is the sum over all
subcells:
\begin{equation}
\Delta V_k^{\mathrm{split}}
=
\sum_{i=1}^{m^d}
\left(\frac{N_{k,i}}{N}\right)^2
\frac{\sigma_{k,i}^2}{n_{k,i}}.
\end{equation}
Substituting $N_{k,i} = N_k / m^d$, $n_{k,i} = n_k / m^d$, and
$\sigma_{k,i}^2 = m^{-\nu} \sigma_k^2$ gives
\begin{align}
\Delta V_k^{\mathrm{split}}
&=
\sum_{i=1}^{m^d}
\left(\frac{N_k}{m^d N}\right)^2
\frac{m^{-\nu} \sigma_k^2}{n_k / m^d} \\
&=
\sum_{i=1}^{m^d}
m^{-\nu-d}
\left(\frac{N_k}{N}\right)^2
\frac{\sigma_k^2}{n_k} \\
&=
m^d \cdot m^{-\nu-d} \Delta V_k \\
&=
m^{-\nu} \Delta V_k.
\end{align}

Hence the variance reduction induced by refining cell $k$ is
\begin{equation}
\Delta V_k - \Delta V_k^{\mathrm{split}}
=
\left(1 - m^{-\nu}\right) \Delta V_k.
\end{equation}
In particular, the expected gain from refinement is proportional to the current
variance contribution $\Delta V_k$. Therefore, among candidate cells, the
maximally beneficial split is obtained by selecting the cell with the largest
$\Delta V_k$.

\section{Derivation of the Gradient Variance Approximation}
\label{app:grad_variance}

Starting from
\[
\mathbf g_i = \mathbf J_i^\top r_i,
\]
we assume that the residual $r_i$ is a scalar and that $\mathbf J_i$ is a vector-valued Jacobian within region $k$. We define the regional means as
\[
\bar r_k := \mathbb{E}[r_i \mid k], 
\qquad
\bar{\mathbf J}_k := \mathbb{E}[\mathbf J_i \mid k].
\]
Let the centered variables be
\[
\delta r_i := r_i - \bar r_k,
\qquad
\delta \mathbf J_i := \mathbf J_i - \bar{\mathbf J}_k.
\]
Then
\[
\mathbf g_i
= (\bar{\mathbf J}_k + \delta \mathbf J_i)^\top (\bar r_k + \delta r_i).
\]
Expanding this expression gives
\[
\mathbf g_i
=
\bar{\mathbf J}_k^\top \bar r_k
+
\bar{\mathbf J}_k^\top \delta r_i
+
\delta \mathbf J_i^\top \bar r_k
+
\delta \mathbf J_i^\top \delta r_i.
\]
Using a first-order approximation, we neglect the second-order term
$\delta \mathbf J_i^\top \delta r_i$, so that
\[
\mathbf g_i - \bar{\mathbf g}_k
\approx
\bar{\mathbf J}_k^\top \delta r_i
+
\bar r_k\,\delta \mathbf J_i^\top,
\]
where
\[
\bar{\mathbf g}_k := \mathbb{E}[\mathbf g_i \mid k]
\approx
\bar{\mathbf J}_k^\top \bar r_k.
\]

The total variance of $\mathbf g_i$ in region $k$ is defined as
\[
\sigma_{g,k}^2
:=
\operatorname{tr}\!\big(\operatorname{Cov}(\mathbf g_i \mid k)\big)
=
\mathbb{E}\!\left[\|\mathbf g_i - \bar{\mathbf g}_k\|_2^2 \mid k\right].
\]
Substituting the first-order approximation yields
\begin{align}
\sigma_{g,k}^2
&\approx
\mathbb{E}\!\left[
\left\|
\bar{\mathbf J}_k^\top \delta r_i
+
\bar r_k\,\delta \mathbf J_i^\top
\right\|_2^2
\middle| k
\right] \\
&=
\mathbb{E}\!\left[\|\bar{\mathbf J}_k^\top \delta r_i\|_2^2 \mid k\right]
+
\mathbb{E}\!\left[\|\bar r_k\,\delta \mathbf J_i^\top\|_2^2 \mid k\right]
+
2\,\mathbb{E}\!\left[
\left\langle
\bar{\mathbf J}_k^\top \delta r_i,
\bar r_k\,\delta \mathbf J_i^\top
\right\rangle
\middle| k
\right].
\end{align}
If we neglect the cross term, then
\[
\sigma_{g,k}^2
\approx
\|\bar{\mathbf J}_k\|_2^2\,\mathbb{E}[\delta r_i^2 \mid k]
+
\bar r_k^{\,2}\,\mathbb{E}\!\left[\|\delta \mathbf J_i\|_2^2 \mid k\right].
\]
Finally, defining
\[
\sigma_{r,k}^2 := \operatorname{Var}(r_i \mid k)
= \mathbb{E}[\delta r_i^2 \mid k],
\qquad
\sigma_{J,k}^2 := \operatorname{tr}\!\big(\operatorname{Cov}(\mathbf J_i \mid k)\big)
= \mathbb{E}\!\left[\|\delta \mathbf J_i\|_2^2 \mid k\right],
\]
we obtain
\[
\sigma_{g,k}^2
\approx
\|\bar{\mathbf J}_k\|_2^2\,\sigma_{r,k}^2
+
\bar r_k^{\,2}\,\sigma_{J,k}^2.
\]

\section{Local Improvement Along the Residual-Power Family}

\subsection{A Sufficient Condition for Local Improvement with Residual-Power Weighting}
\label{app:alpha_monotone}

We analyze the residual-power weighting family used by ACES directly. Fix a partition with region
proportions $p_k$, regional mean gradients $\bar{\mathbf g}_k$, and independent regional minibatch
estimates $\hat{\mathbf g}_k$. Let
\[
\mathbf g=\sum_{k=1}^{K}p_k\bar{\mathbf g}_k,
\qquad
c_k=p_k^2\,\mathbb E\|\hat{\mathbf g}_k-\bar{\mathbf g}_k\|^2,
\qquad
V=\sum_{k=1}^{K}c_k.
\]
For normalized weights satisfying $\sum_k p_k w_k=1$, define
\[
\bar{\mathbf g}_w=\sum_{k=1}^{K}p_kw_k\bar{\mathbf g}_k,
\qquad
F(w)=
\frac{\langle\mathbf g,\bar{\mathbf g}_w\rangle^2}
{\|\bar{\mathbf g}_w\|^2+\sum_{k=1}^{K}w_k^2c_k}.
\]
This is the one-step proxy $S_w^2/G_w$ from Eq.~\eqref{eq:alignment_energy_ratio}. We further define
\[
a_k=p_k\langle\mathbf g,\bar{\mathbf g}_k\rangle,
\qquad
t_k=a_kV-\|\mathbf g\|^2c_k.
\]
Because $\sum_k a_k=\|\mathbf g\|^2$ and $\sum_k c_k=V$, we have $\sum_k t_k=0$.

The residual-power family in Eq.~\eqref{eq:practical_region_weight} is
\[
w_k(\alpha)
=
\frac{|\bar r_k|^\alpha}{\sum_{j=1}^{K} p_j |\bar r_j|^\alpha},
\qquad
\alpha \geq 0,
\]
which satisfies $\sum_k p_k w_k(\alpha)=1$ and recovers unbiased weighting at $\alpha=0$.

\begin{proposition}[Local improvement under residual-score correlation]
\label{prop:app_alpha_monotone}
Assume $\mathbf g\neq\mathbf 0$, $c_k>0$, and $|\bar r_k|>0$ for all $k$. Let
$\ell_k=\log|\bar r_k|$. Then
\[
\left.
\frac{d}{d\alpha} F\bigl(w(\alpha)\bigr)
\right|_{\alpha = 0}
=
\frac{2\|\mathbf g\|^2}{(\|\mathbf g\|^2+V)^2}
\sum_{k=1}^{K} t_k\, \ell_k.
\]
In particular, if the region improvement scores and log-residual magnitudes are positively
correlated in the sense that
\[
\sum_{k=1}^{K} t_k \log|\bar r_k| \;>\; 0,
\]
then there exists $\bar\alpha > 0$ such that $F(w(\alpha)) > F(w(0)) = F(\mathbf 1)$ for all
$\alpha \in (0, \bar\alpha)$: a small residual-power bias strictly improves the one-step proxy
over the unbiased stratified estimator.
\end{proposition}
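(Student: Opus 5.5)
The plan is a direct chain-rule computation of the derivative, followed by a first-order Taylor argument. Write $F(w)=N(w)^2/D(w)$ with $N(w)=\langle\mathbf g,\bar{\mathbf g}_w\rangle=\sum_k w_k a_k$ and $D(w)=\|\bar{\mathbf g}_w\|^2+\sum_k w_k^2 c_k$.

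First I will differentiate the weights at $\alpha=0$. Write $w_k(\alpha)=e^{\alpha\ell_k}/Z(\alpha)$ with $Z(\alpha)=\sum_j p_j e^{\alpha\ell_j}$. Since $Z(0)=\sum_j p_j=1$, we get $w_k(0)=1$ and
\[
w_k'(0)=\ell_k-\bar\ell,\qquad \bar\ell:=\sum_j p_j\ell_j .
\]

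Next I will compute the partial derivatives of $N$ and $D$ at $w=\mathbf 1$. Here $N(\mathbf 1)=\sum_k a_k=\|\mathbf g\|^2$ and $\partial N/\partial w_k=a_k$. Since $\bar{\mathbf g}_{\mathbf 1}=\mathbf g$, we have $D(\mathbf 1)=\|\mathbf g\|^2+V$, and
\[
\frac{\partial D}{\partial w_k}\Big|_{\mathbf 1}=2p_k\langle\bar{\mathbf g}_{\mathbf 1},\bar{\mathbf g}_k\rangle+2c_k=2a_k+2c_k .
\]
The quotient rule gives
\[
\frac{\partial F}{\partial w_k}\Big|_{\mathbf 1}=\frac{2N D a_k-N^2(2a_k+2c_k)}{D^2}.
\]
Substituting $N=\|\mathbf g\|^2$ and $D=\|\mathbf g\|^2+V$, the numerator becomes $2\|\mathbf g\|^2\bigl(a_k(\|\mathbf g\|^2+V)-\|\mathbf g\|^2a_k-\|\mathbf g\|^2c_k\bigr)=2\|\mathbf g\|^2 t_k$. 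Hence
\[
\frac{\partial F}{\partial w_k}\Big|_{\mathbf 1}=\frac{2\|\mathbf g\|^2}{(\|\mathbf g\|^2+V)^2}\,t_k .
\]

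Contracting this gradient with $w'(0)$ gives $\sum_k t_k(\ell_k-\bar\ell)$. Because $\sum_k t_k=0$, as noted before the proposition, the $\bar\ell$ term drops and we obtain exactly the claimed formula. The one point needing care is that $\bar\ell$ is a $p$-weighted mean while the cancellation uses the unweighted sum $\sum_k t_k=0$. This still works because $\bar\ell$ is a constant independent of $k$, so no weights enter.

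For the "in particular" part, $F(w(\alpha))$ is a smooth function of $\alpha$ near $0$. It is a composition of smooth maps, and the denominator satisfies $D\ge\sum_k w_k^2c_k>0$ since $c_k>0$ and $w_k>0$. A strictly positive derivative at $\alpha=0$ therefore gives $\bar\alpha>0$ with $F(w(\alpha))>F(w(0))$ on $(0,\bar\alpha)$, by the definition of the derivative. The positive prefactor uses $\mathbf g\neq\mathbf 0$. The main obstacle is only bookkeeping: the $\partial D$ term that couples $\bar{\mathbf g}_w$ back to $\mathbf g$, and confirming that $D$ indeed equals $G_w$ under independent regional estimates.
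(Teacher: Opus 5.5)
Your proposal is correct and follows essentially the same route as the paper's proof: the reparametrization $w_k(\alpha)=e^{\alpha\ell_k}/Z(\alpha)$ giving $w_k'(0)=\ell_k-\bar\ell$, the quotient-rule gradient $\partial F/\partial w_k|_{\mathbf 1}=\frac{2\|\mathbf g\|^2}{(\|\mathbf g\|^2+V)^2}t_k$, the chain rule, and cancellation of the $\bar\ell$ term via $\sum_k t_k=0$. For the final step you use the definition of the derivative, with $D>0$ because $c_k>0$; this is, if anything, a slightly more precise justification than the paper's appeal to continuity.
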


\begin{proof}
Write $w_k(\alpha) = e^{\alpha \ell_k} / Z(\alpha)$ with $Z(\alpha) = \sum_j p_j e^{\alpha
\ell_j}$, so $Z(0) = 1$ and $Z'(0) = \sum_j p_j \ell_j =: \bar\ell$. Differentiating,
\[
\left.
\frac{d w_k}{d\alpha}
\right|_{\alpha = 0}
=
\left.
\frac{\ell_k e^{\alpha \ell_k} Z(\alpha) - e^{\alpha \ell_k} Z'(\alpha)}{Z(\alpha)^2}
\right|_{\alpha = 0}
=
\ell_k - \bar\ell.
\]
At $w=\mathbf 1$, the numerator components of $F$ satisfy
\[
S_{\mathbf 1}=\|\mathbf g\|^2,
\qquad
G_{\mathbf 1}=\|\mathbf g\|^2+V,
\qquad
\left.\frac{\partial S_w}{\partial w_k}\right|_{\mathbf 1}=a_k,
\qquad
\left.\frac{\partial G_w}{\partial w_k}\right|_{\mathbf 1}=2(a_k+c_k).
\]
The quotient rule therefore gives
\[
\left.\frac{\partial F}{\partial w_k}\right|_{\mathbf 1}
=
\frac{2\|\mathbf g\|^2}{(\|\mathbf g\|^2+V)^2}
\left(a_kV-\|\mathbf g\|^2c_k\right)
=
\frac{2\|\mathbf g\|^2}{(\|\mathbf g\|^2+V)^2}t_k.
\]
Applying the chain rule,
\[
\left.
\frac{d}{d\alpha} F\bigl(w(\alpha)\bigr)
\right|_{\alpha=0}
=
\sum_{k=1}^{K}
\left.
\frac{\partial F}{\partial w_k}
\right|_{\mathbf 1}
(\ell_k - \bar\ell)
=
\frac{2\|\mathbf g\|^2}{(\|\mathbf g\|^2+V)^2}
\sum_{k=1}^{K} t_k (\ell_k - \bar\ell).
\]
Since $\sum_{k=1}^{K} t_k = 0$, the term $\bar\ell \sum_k t_k$ vanishes, yielding the stated
derivative. If the sufficient condition holds, this derivative is strictly positive, and
continuity of $\alpha \mapsto F(w(\alpha))$ gives the claimed strict improvement on an interval
$(0, \bar\alpha)$.
\end{proof}


\subsection{Empirical Verification of the Ascent Condition}
\label{app:alpha_monotone_empirical}

We test the sufficient condition of Proposition~\ref{prop:app_alpha_monotone},
$\sum_{k=1}^{K} t_k \log|\bar r_k| > 0$, directly on trained checkpoints. For each of
PoolBoiling2D and NS2D, we take five independently trained ACES runs (one per time frame) and
freeze the model at six checkpoints spanning training (epochs 100, 300, 1,000, 2,000, 4,000, and 4,999).

Across all $2 \text{ datasets} \times 5 \text{ runs} \times 6 \text{ checkpoints} = 60$
configurations, the identity $\sum_k t_k = 0$ holds to within $3\times10^{-16}$ relative error (a
correctness check on the construction), and the sufficient condition
$\sum_k t_k \log|\bar r_k| > 0$ holds in \emph{every} configuration: $60/60$. The
underlying Spearman rank correlation $\rho(t_k, \log|\bar r_k|)$ stays positive
throughout training on both datasets, ranging over $[0.225, 0.606]$ on PoolBoiling2D and
$[0.312, 0.774]$ on NS2D. Figure~\ref{fig:alpha_monotone_empirical} plots, for both datasets and
across training, (a) the resulting ascent slope $\left.dF/d\alpha\right|_{\alpha=0}$ from
Proposition~\ref{prop:app_alpha_monotone} (log-scale, since it spans several orders of magnitude
between the two datasets, and is guaranteed positive by the result above) and (b) the underlying
rank correlation. Together with Appendix~\ref{sec:appendix_proxy_validity}, this confirms that the
first-order condition licensing a small residual-power bias holds throughout training, on both
evaluated datasets, not merely at initialization.

\begin{figure}[t]
  \centering
  \includegraphics[width=0.85\linewidth]{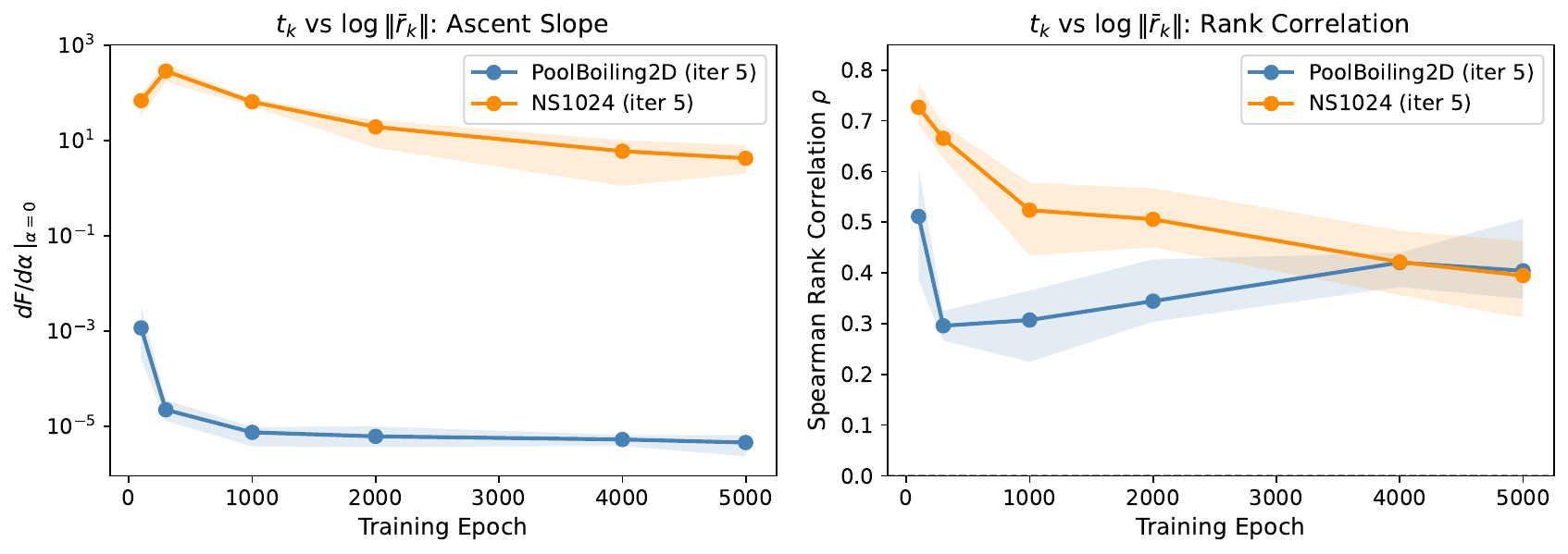}
  \caption{Empirical verification of the sufficient condition in
  Proposition~\ref{prop:app_alpha_monotone} on PoolBoiling2D and NS2D using a budget-matched ACES
  partition ($\alpha=0$, i.e.\ $w=\mathbf 1$). The curves show the mean with the min--max range over five time-frame runs per
  dataset. \textbf{(a)} The ascent slope $\left.dF/d\alpha\right|_{\alpha=0} = \frac{2\|\mathbf g\|^2}
  {(\|\mathbf g\|^2+V)^2}\sum_k t_k \log|\bar r_k|$ stays strictly positive at every training
  checkpoint on both datasets ($60/60$ configurations). \textbf{(b)} The underlying Spearman rank
  correlation $\rho(t_k, \log|\bar r_k|)$ also stays positive throughout training,
  confirming that regions with larger residual magnitude tend to have a more favorable
  alignment-to-variance ratio $t_k$, not only near initialization.}
  \label{fig:alpha_monotone_empirical}
\end{figure}

\section{Original One-Step Improvement Score}
\label{sec:appendix_original_one_step_score}

In the main text, we affinely normalize the proxy so that uniform sampling has value $0$ and the full-gradient oracle has value $1$ at each checkpoint. For reference, Figure~\ref{fig:appendix_grad_alignment_original_metric} shows the original one-step improvement score corresponding directly to $S_w^2/G_w$.

\begin{figure}[t]
	\centering
	\includegraphics[width=0.7\linewidth]{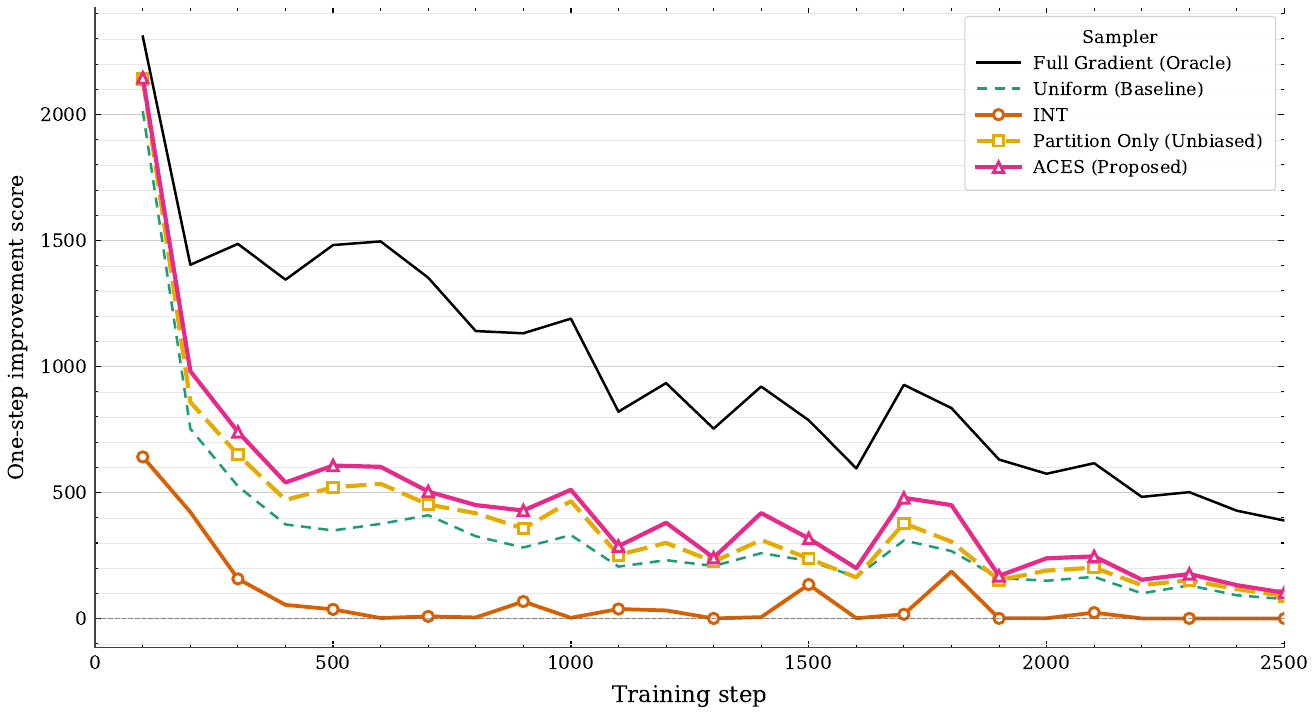}
	\caption{Original one-step improvement score across training steps, corresponding to $S_w^2 / G_w$ before normalization/sign transformation.}
	\label{fig:appendix_grad_alignment_original_metric}
\end{figure}

\section{Validity of Residual Variance as a Proxy}
\label{sec:appendix_proxy_validity}

To justify our proxy choice, we compare residual variance $\sigma_{r,k}^2$ with gradient variance $\sigma_{g,k}^2$ at the region level on NS2D using checkpoints from uniform sampling. We evaluate two cell granularities ($16\times16$ and $32\times32$ regions) and report two diagnostics over training: (1) Spearman rank correlation between region scores, and (2) top-$K$ overlap between the highest-scoring regions under $\sigma_{g,k}^2$ and $\sigma_{r,k}^2$.

Figure~\ref{fig:proxy_rank_topk} shows strong agreement between residual variance and gradient variance throughout training. The mean Spearman correlation remains high for both granularities, reaching $\rho=0.807\pm0.031$ ($16\times16$) and $\rho=0.820\pm0.054$ ($32\times32$). In addition, top-$K$ overlap remains substantially above the random baseline across all evaluated thresholds. For example, at top-20\%, the overlap reaches $0.719\pm0.059$ ($16\times16$) and $0.660\pm0.070$ ($32\times32$), compared to the random baseline of $0.20$. These results indicate that residual variance largely identifies the same informative regions as direct gradient variance while being much cheaper to estimate, supporting its use as a scalable proxy in ACES.

\begin{figure}[t]
  \centering
  \includegraphics[width=0.8\linewidth]{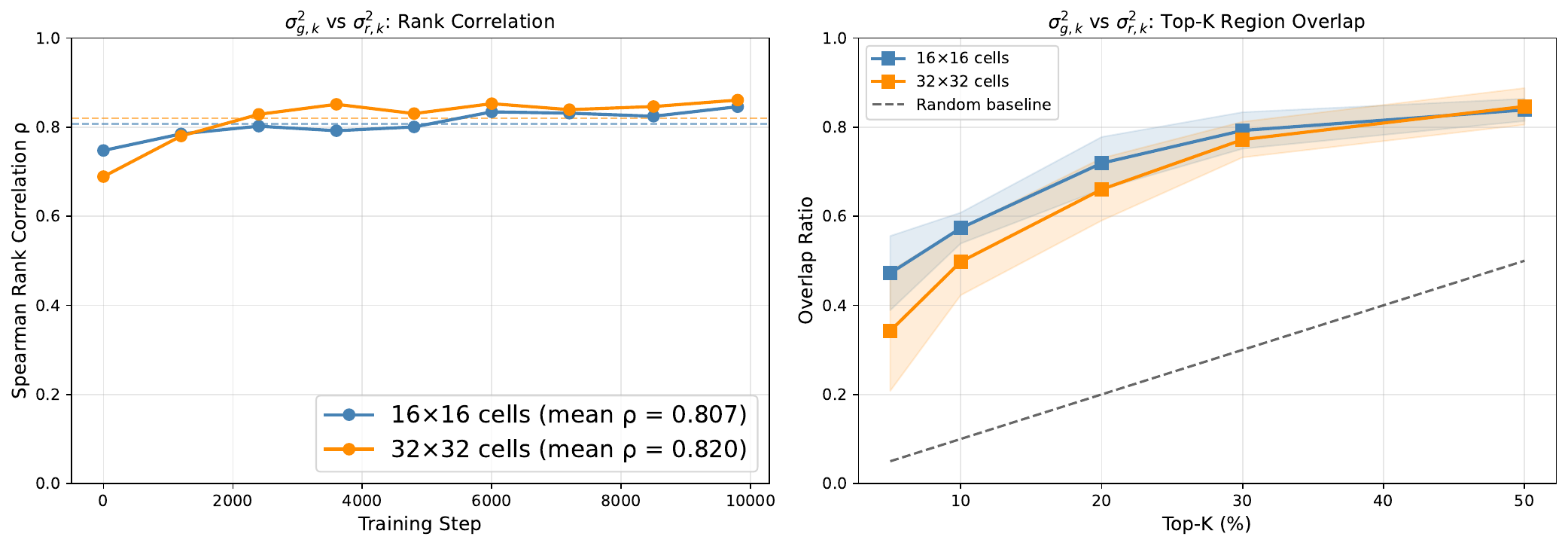}
  \caption{Validation of residual variance as a proxy for gradient variance on NS2D. Left: Spearman rank correlation between $\sigma_{g,k}^2$ and $\sigma_{r,k}^2$ across checkpoints for random $16\times16$ and $32\times32$ region sets. Right: top-$K$ overlap between regions selected by $\sigma_{g,k}^2$ and $\sigma_{r,k}^2$. Both curves stay well above the random baseline, showing that residual variance preserves region ranking and high-importance region retrieval.}
  \label{fig:proxy_rank_topk}
\end{figure}

\section{Generalization to a Fourier Feature Network}
\label{sec:appendix_ffn_backbone}

To test whether the benefits of ACES depend on the SIREN architecture used in the main experiments, we replace SIREN with a Fourier Feature Network (FFN)~\citep{tancik2020fourierfeatures} and repeat the PoolBoiling2D comparison. All sampling methods use the same FFN backbone and evaluation protocol. Table~\ref{tab:appendix_ffn_backbone} reports relative MSE throughout training, averaged over three random seeds.

\begin{table}[t]
  \centering
  \scriptsize
  \setlength{\tabcolsep}{2pt}
  \caption{Relative MSE on PoolBoiling2D using a Fourier Feature Network backbone (mean $\pm$ standard deviation over three seeds). Lower is better; bold denotes the lowest mean at each checkpoint.}
  \label{tab:appendix_ffn_backbone}
  \begin{tabular}{@{}lccccc@{}}
    \toprule
    \textbf{Method} & \textbf{200} & \textbf{400} & \textbf{800} & \textbf{2,000} & \textbf{5,000} \\
    \midrule
    Uniform random & $\mathbf{0.9537\pm0.0047}$ & $0.9435\pm0.0141$ & $0.8656\pm0.0730$ & $0.5759\pm0.1898$ & $0.2866\pm0.0411$ \\
    INT & $1.1370\pm0.0818$ & $\mathbf{0.9045\pm0.0573}$ & $0.7092\pm0.0425$ & $0.5009\pm0.0324$ & $0.3359\pm0.0231$ \\
    EVOS & $2.6540\pm0.2279$ & $1.1808\pm0.0780$ & $0.7930\pm0.0544$ & $0.4755\pm0.0292$ & $0.3363\pm0.0189$ \\
    Fixed stratified & $0.9564\pm0.0014$ & $0.9483\pm0.0093$ & $0.9050\pm0.0366$ & $0.6387\pm0.1542$ & $0.3262\pm0.0590$ \\
    ACES & $1.4906\pm0.1301$ & $1.1236\pm0.4569$ & $\mathbf{0.6154\pm0.3184}$ & $\mathbf{0.2453\pm0.0469}$ & $\mathbf{0.1536\pm0.0263}$ \\
    \bottomrule
  \end{tabular}
\end{table}

Uniform sampling and INT obtain lower mean error during the earliest checkpoints, whereas ACES obtains the lowest mean error from 800 iterations onward. At 5,000 iterations, ACES reaches a relative MSE of $0.1536\pm0.0263$, compared with $0.2866\pm0.0411$ for uniform sampling. This result indicates that the later-stage advantage of adaptive region-level sampling is not specific to the SIREN backbone, while also showing that ACES can have a slower initial phase with this architecture.

\section{Experimental Settings and Hardware}
\label{sec:appendix_experiment_setting_hardware}

This section summarizes the practical training setup used for the experiments in this paper.

\paragraph{Common training setup.}
Unless otherwise stated, we use a SIREN INR backbone. The spatial experiments draw a fraction $2\times10^{-3}$ of the candidate points per iteration, and the joint spatiotemporal experiments draw $7\times10^{-4}$. The FFN experiment in Appendix~\ref{sec:appendix_ffn_backbone} changes only the backbone comparison described there. We report relative MSE and average results over three random seeds.

\paragraph{Experiment configuration.}
All SIREN experiments use SGD with momentum $0.9$, Nesterov acceleration, zero weight decay, and zero dampening. The spatial experiments use depth $3$, hidden width $155$, and $w_0=30$; their learning rates are $10^{-5}$ for NS2D and $10^{-3}$ for PoolBoiling2D. The joint spatiotemporal experiments use depth $10$, hidden width $384$, and $w_0=60$. Their learning rates are $2\times10^{-3}$ on NS3D and $5\times10^{-3}$ on PoolBoiling3D for Uniform, Fixed Stratified, and ACES, and $10^{-4}$ for INT. Learning rates are selected by a coarse search over values from $10^{-6}$ to $10^{-2}$. In the joint experiments, ACES uses adaptive partitioning without residual-powered loss weighting; this configuration isolates whether adaptive coverage scales to the $(x,y,t)$ domain.

\paragraph{Hardware and software environment.}
All reported runs were executed on a single NVIDIA A100-SXM4-80GB GPU (80 GB memory). The runtime environment reported by \texttt{nvidia-smi} is: Driver Version \texttt{580.105.08}, CUDA Version \texttt{13.0}, with GPU model \texttt{NVIDIA A100-SXM4-80GB}. This hardware specification is used for runtime and wall-clock comparisons in the paper.

\section{Runtime and Hyperparameter Sensitivity}
\label{sec:appendix_runtime_sensitivity}

\subsection{Runtime and Overhead Analysis}
We re-profile the current NS2D configuration on a single NVIDIA A100-SXM4-80GB GPU. For each of three random seeds, we first run 20 warm-up iterations and then measure 1,000 training iterations. We synchronize CUDA before and after every timed region and separately measure the model-training stage, ordinary sampling, and adaptive partition construction. The model-training stage includes the weighted loss, forward and backward passes, gradient processing, and optimizer update. ACES rebuilds its partition every 200 iterations; therefore, each measured run contains five partition updates. Table~\ref{tab:appendix_runtime_breakdown} reports the mean and standard deviation across the three runs.

\begin{table}[t]
  \centering
  \small
  \setlength{\tabcolsep}{3.5pt}
  \caption{Runtime breakdown on NS2D in milliseconds per training iteration (mean $\pm$ standard deviation over three seeds). Partition construction is excluded from the sampling column and reported separately as an amortized cost. The final column gives the cost of one partition update.}
  \label{tab:appendix_runtime_breakdown}
  \begin{tabular}{@{}lccccc@{}}
    \toprule
    \textbf{Method} & \textbf{Training} & \textbf{Sampling} & \textbf{Partition} & \textbf{Total} & \textbf{Per update} \\
    \midrule
    Uniform & $1.2474\pm0.0002$ & $0.4140\pm0.0008$ & --- & $1.6615\pm0.0010$ & --- \\
    ACES & $1.4688\pm0.0051$ & $1.3547\pm0.0051$ & $0.1609\pm0.0007$ & $2.9844\pm0.0037$ & $32.184\pm0.144$ \\
    \bottomrule
  \end{tabular}
\end{table}

ACES costs approximately $1.80\times$ as much per iteration as uniform sampling. Most of the difference comes from region-wise sample allocation and construction of the weighted minibatch; periodic partition construction contributes only $0.161$ ms per iteration after amortization. The larger training-stage cost reflects the weighted loss operations.

\subsection{Sensitivity to Weighting and Partition Updates}
We first vary the residual weighting exponent $\alpha$ while keeping the remaining sampler settings fixed. Table~\ref{tab:appendix_alpha_sensitivity} reports relative MSE averaged over five frames. Increasing $\alpha$ from a weak weighting regime improves reconstruction on both datasets, while the improvement saturates in a moderate range. Very strong concentration provides no further consistent gain.

\begin{table}[t]
  \centering
  \small
  \caption{Sensitivity to the weighting exponent $\alpha$. Lower relative MSE is better; bold denotes the lowest value in each row.}
  \label{tab:appendix_alpha_sensitivity}
  \begin{tabular}{@{}lcccccccc@{}}
    \toprule
    $\alpha$ & 0.1 & 0.2 & 0.5 & 1.0 & 2.0 & 3.0 & 4.0 & 5.0 \\
    \midrule
    PoolBoiling2D & 0.1425 & 0.1363 & 0.1258 & 0.1097 & \textbf{0.1084} & 0.1096 & 0.1123 & 0.1117 \\
    NS2D & 0.2816 & 0.2744 & 0.2605 & 0.2367 & 0.2196 & 0.2163 & \textbf{0.2156} & 0.2163 \\
    \bottomrule
  \end{tabular}
\end{table}

We next vary the number of iterations between partition updates. As shown in Table~\ref{tab:appendix_interval_sensitivity}, performance changes little for intervals between 50 and 1,000 iterations. In contrast, constructing the partition only at initialization (the 5,000-iteration setting) increases error on both datasets. These results indicate that periodic adaptation matters, while performance is not sensitive to the exact update frequency over a broad range.

\begin{table}[t]
  \centering
  \small
  \caption{Sensitivity to the partition-update interval. Relative MSE is reported as mean $\pm$ standard deviation; lower is better. ``Initial only'' constructs the partition at initialization and does not refresh it during the 5,000-iteration run.}
  \label{tab:appendix_interval_sensitivity}
  \begin{tabular}{@{}lcc@{}}
    \toprule
    \textbf{Update interval} & \textbf{PoolBoiling2D} & \textbf{NS2D} \\
    \midrule
    50 & $0.1228\pm0.0041$ & $0.2368\pm0.0015$ \\
    100 & $0.1242\pm0.0046$ & $0.2374\pm0.0025$ \\
    200 & $0.1235\pm0.0071$ & $0.2373\pm0.0019$ \\
    500 & $0.1235\pm0.0039$ & $0.2380\pm0.0021$ \\
    1,000 & $0.1243\pm0.0037$ & $0.2387\pm0.0019$ \\
    Initial only & $0.1396\pm0.0077$ & $0.2457\pm0.0021$ \\
    \bottomrule
  \end{tabular}
\end{table}


\end{document}